\newtheorem{myTheorem}{Theorem} 
\newtheorem{myLemma}{Lemma}
\newcommand{\figref}[1]{Fig.~\ref{#1}}
\newcommand{\tabref}[1]{Table~\ref{#1}}
\newcommand{\secref}[1]{Section~\ref{#1}}
\newcommand{\equref}[1]{Eq. (\ref{#1})}
\newcommand{\myPara}[1]{\vspace{.12in}\noindent\textbf{#1}}
\def\ie{\emph{i.e.}}
\def\eg{\emph{e.g.}}
\def\vs{\emph{vs.~}}
\def\etc{\emph{etc}}
\def\sArt{{state-of-the-art~}}
\newcommand{\first}[1]{{\textbf{\textcolor{red}{#1}}}}
\newcommand{\second}[1]{{\textbf{\textcolor{green}{#1}}}}
\newcommand{\third}[1]{{\textbf{\textcolor{blue}{#1}}}}
\definecolor{mycolor}{rgb}{.9,.9,.9}
\let\mybibitem\bibitem
\renewcommand{\bibitem}[1]{%
    \IfSubStr{#1}{-force}
        {\color{cyan}\mybibitem{#1}}
        {\color{black}\mybibitem{#1}}%
}
\begin{document}
\title{DNA: Deeply-supervised Nonlinear Aggregation for Salient Object Detection}

\author{Yun Liu,
        Ming-Ming Cheng,
        Xin-Yu Zhang,
        Guang-Yu Nie,
        and~Meng Wang
\thanks{Manuscript received April 19, 2005; revised August 26, 2015.
Major Project for New Generation of AI under Grant No. 2018AAA0100400, 
NSFC (NO. 61620106008),
S\&T innovation project from Chinese Ministry of Education,
and Tianjin Natural Science Foundation for Distinguished Young Scholars (NO. 17JCJQJC43700).}
\thanks{Y. Liu, M.-M. Cheng, and X.-Y. Zhang are with Nankai University. M.-M. Cheng (cmm@nankai.edu.cn) is the corresponding author.}
\thanks{G.-Y. Nie is with Beijing Institute of Technology.}
\thanks{M. Wang is with Hefei University of Technology.}}

\markboth{IEEE Transactions on Cybernetics}%
{Liu \MakeLowercase{\textit{et al.}}: DNA: Deeply-supervised Nonlinear Aggregation for Salient Object Detection}

\maketitle

\begin{abstract}
Recent progress on salient object detection mainly aims 
at exploiting how to effectively integrate multi-scale 
convolutional features in convolutional neural networks (CNNs).
Many popular methods impose deep supervision to 
perform side-output predictions that are linearly aggregated 
for final saliency prediction.
In this paper, we theoretically and experimentally demonstrate 
that linear aggregation of side-output predictions is suboptimal, 
and it only makes limited use of the side-output information 
obtained by deep supervision.
To solve this problem, we propose Deeply-supervised Nonlinear 
Aggregation (DNA) for better leveraging the complementary 
information of various side-outputs.
Compared with existing methods, it i) aggregates side-output 
features rather than predictions, and ii) adopts nonlinear 
instead of linear transformations.  
Experiments demonstrate that DNA can successfully break through 
the bottleneck of current linear approaches.
Specifically, the proposed saliency detector, a modified U-Net 
architecture with DNA, performs favorably against 
\sArt methods on various datasets and evaluation metrics 
without bells and whistles.
\end{abstract}

\begin{IEEEkeywords}
Salient object detection, saliency detection, deeply-supervised 
nonlinear aggregation.
\end{IEEEkeywords}

\IEEEpeerreviewmaketitle

\section{Introduction}
\IEEEPARstart{S}{alient} object detection, also known as saliency 
detection, aims at simulating the human vision system to detect 
the most conspicuous and eye-attracting objects or regions in  
natural images \cite{achanta2009frequency,cheng2015global,BorjiCVM2019}.
The progress in saliency detection has been beneficial to 
a wide range of vision applications, including 
image retrieval \cite{gao2013visual,Sketch2Photo},
visual tracking \cite{mahadevan2009saliency},
scene classification \cite{ren2014region},
content-aware image/video processing \cite{zund2013content,TIP20_SP_NPR},
thumbnail generation \cite{wang2016stereoscopic},
video object segmentation \cite{wang2018saliency},
and weakly supervised learning \cite{wei2017object,wei2018revisiting}.
Although numerous models have been presented 
\cite{li2018contour,chen2018reverse,zhang2018progressive,liu2018picanet,islam2018revisiting,he2017delving,wang2019new,yan2020new,li2020depthwise,chen2020embedding}
and significant improvement has been made, it still remains
an open problem to accurately detect complete salient objects 
in static images, especially in complicated scenarios.

\begin{figure*}[!tb]
\centering
\begin{overpic}[width=\linewidth]{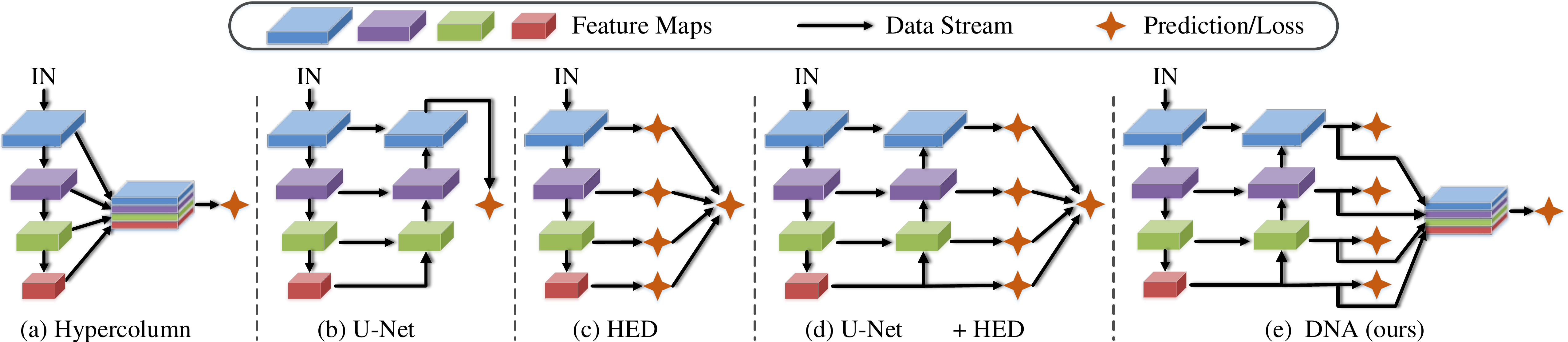}
    \put(12.5, 0.4){\footnotesize\cite{hariharan2015hypercolumns}}
    \put(26.7, 0.4){\footnotesize\cite{ronneberger2015u}}
    \put(42.2, 0.4){\footnotesize\cite{xie2017holistically}}
    \put(57.8, 0.4){\footnotesize\cite{ronneberger2015u}}
    \put(65.6, 0.4){\footnotesize\cite{xie2017holistically}}
\end{overpic}
\caption{Illustration of different multi-scale deep learning
architectures.
Note that (c)-(e) use deep supervision to produce side-outputs,
but (c) and (d) linearly aggregate side-output predictions, while
the proposed DNA (e) adopts nonlinear aggregation onto side-output
features.
} \label{fig:frame_comp}
\end{figure*}

Conventional saliency detection methods 
\cite{cheng2015global,jiang2013salient,tong2015salient}
usually design hand-crafted low-level features and heuristic priors, 
which are difficult to represent semantic objects and scenes.
Recent advances on saliency detection mainly benefit from 
\textit{convolutional neural networks} (CNNs) 
\cite{zhu2017saliency,guo2017video,wang2017deep,wang2017video,Fan2020S4Net}.
On the one hand, CNNs naturally learn multi-scale and multi-level 
feature representations in each layer due to the increasingly 
larger receptive fields and downsampled (strided) scales \cite{pami20Res2net}.
On the other hand, salient object detection requires multi-scale 
learning because of the various object/scene scales in 
intra- and inter-images \cite{liu2020refinedbox,cheng2019bing}.
Therefore, current cutting-edge saliency detectors
\cite{chen2018reverse,zeng2018learning,wang2018salient,zhang2018bi,liu2018picanet,wang2018detect,zhang2017amulet,wang2017stagewise,hou2019deeply} 
mainly aim at designing complex network architectures to leverage 
multi-scale CNN features, \eg, the semantic meaningful information 
in the top sides and the complementary spatial details in the 
bottom sides.

Owing to the superiority of U-Net \cite{ronneberger2015u} 
(or FCN \cite{long2015fully}) and HED \cite{xie2017holistically} 
in multi-scale learning, many leading-edge saliency 
detectors add deep supervision onto U-Net networks 
\cite{zhang2018progressive,wang2018salient,liu2018picanet,islam2018revisiting,zhang2017amulet,liu2016dhsnet,he2017delving},
\cite{guan2018edge} (\figref{fig:frame_comp}(d)).
We note that these networks first predict multi-scale saliency 
maps using side-outputs.
The generated multi-scale side-output predictions are then linearly 
aggregated, \eg, via a pixel-wise convolution (\ie, $1 \times 1$ 
convolution), to obtain the final saliency prediction which can  
thus combine the advantages of all side-output predictions.
However, we theoretically and experimentally demonstrate that the 
\textbf{linear aggregation of side-output predictions} is suboptimal, 
and it makes limited use of the complementary multi-scale 
information implicated in side-output features.
We provide detailed proofs in \secref{sec:linear}.

Instead of linearly aggregating side-output predictions,
we propose a nonlinear side-output aggregation method.
Specifically, we concatenate the side-output features rather 
than side-output predictions and then apply nonlinear 
transformations to predict salient objects.
We also impose deep supervision to side-output features 
for better optimization in the training phase,
as illustrated in \figref{fig:frame_comp}(e).
In this way, the concatenated features can make better use 
of the multi-scale side-output features. 
We call the resulting method \textbf{Deeply-supervised Nonlinear 
Aggregation (DNA)}.
%
We apply DNA into a simply redesigned U-Net without 
bells and whistles.
The proposed network performs favorably against all previous \sArt 
salient object detectors with less parameters and faster speed.
Our contributions are twofold:
\begin{itemize}
\item We theoretically and experimentally analyze the natural 
limitation of traditional linear side-output aggregation which
can only make limited use of multi-scale side-output information.
\item We propose Deeply-supervised Nonlinear Aggregation (DNA) for 
side-output features, whose effectiveness has been proved by 
introducing it into a simple network with less parameters and faster speed. 
\end{itemize}

\section{Related Work} \label{sec:related}
Salient object detection is a very active research field due to 
its wide range of applications and challenging scenarios. 
Early \textit{heuristic saliency detection} methods extract 
hand-crafted low-level features and apply machine learning models 
to classify these features 
\cite{gong2015saliency,tu2016real,xia2017and}.
Some heuristic saliency priors are utilized to ensure the accuracy,
such as color contrast \cite{achanta2009frequency,cheng2015global},
center prior \cite{jiang2013submodular,jiang2013salient} 
and background prior \cite{yang2013saliency,zhu2014saliency}.
With vast successes achieved by deep CNNs in computer vision, 
CNN-based methods have been introduced to improve saliency detection
\cite{zhang2017supervision,zhang2018deep,tavakoli2017saliency,lang2016dual,li2016deepsaliency}.
\textit{Region-based saliency detection} 
\cite{zhao2015saliency,wang2015deep,li2015visual,lee2016deep,chen2016disc,wang2019new,yan2020new}
appeared in the early era of deep learning based saliency.
These approaches view each image patch as a basic processing 
unit to perform saliency detection.
More recently, \textit{CNN-based image-to-image saliency detection} 
\cite{chen2018reverse,zhang2018progressive,zeng2018learning,wang2018salient,zhang2018bi,liu2018picanet,wang2018detect,islam2018revisiting,zhang2017amulet,chen2017look,wang2017stagewise,liu2018deep,liu2018learning,hou2019deeply,li2016deep,luo2017non,liu2016dhsnet,li2020depthwise,chen2020embedding}
has dominated this field by viewing saliency detection as a pixel-wise 
regression task and performing image-to-image predictions.
Hence we mainly review CNN-based image-to-image saliency detection 
in the following.

Since saliency detection requires both high-level global information 
(existing in the top sides of CNNs) and low-level local details 
(existing in the bottom sides of CNNs), how to effectively fuse 
multi-level deep features is the main research direction 
\cite{chen2018reverse,zeng2018learning,wang2018salient,zhang2018bi,liu2018picanet,wang2018detect,zhang2017amulet,wang2017stagewise,hou2019deeply,li2016deep,luo2017non,liu2016dhsnet,wu2019cascaded,zhang2019capsal}.
There are too many studies to list here, but the general trend 
of recent network designs is to become more and more complicated. 
We continue our discussion by briefly categorizing multi-scale 
deep learning into four classes: \textit{hyper feature learning}, 
\textit{U-Net style}, \textit{HED style},
and \textit{U-Net + HED style}.
An overall illustration of them is shown in \figref{fig:frame_comp}.

\myPara{Hyper feature learning:}
Hyper feature learning \cite{hariharan2015hypercolumns,hu2018learning,liu2018deepseg} 
is the most intuitive way to learn multi-scale information,
as illustrated in \figref{fig:frame_comp}(a).
Examples of this structure for saliency include 
\cite{li2016deep,zeng2018learning,chen2017look,wang2017stagewise,liu2018deep,su2019selectivity,liu2018learning,zhao2019pyramid}.
These models concatenate/sum multi-scale deep features 
from multiple layers of backbone nets 
\cite{li2016deep,zeng2018learning} 
or branches of the multi-stream nets 
\cite{chen2017look,wang2017stagewise,liu2018deep}. 
The fused hyper features, called \textit{hypercolumn}, are then used 
for final predictions.

\myPara{U-Net style:}
It is widely accepted that the top layers of deep neural networks 
contain high-level semantic information, while the bottom 
layers learn low-level fine details.
Therefore, a reasonable revision of hyper feature learning 
is to progressively fuse deep features from upper layers to 
lower layers \cite{long2015fully,ronneberger2015u}, 
as shown in \figref{fig:frame_comp}(b). 
The top semantic features will combine with bottom low-level 
features to capture fine-grained details.
The feature fusion can be a simple element-wise summation 
\cite{long2015fully}, a simple feature map concatenation (U-Net) 
\cite{ronneberger2015u}, or complex designs based on them.
Many saliency detectors are of this type 
\cite{wang2016saliency,zhang2017learning,luo2017non,hu2017deep,wang2018detect,zhang2018bi,li2018contour,bruce2016deeper,qin2019basnet}.
Note that hyper feature learning and U-Net do not apply deep 
supervision, so they \textbf{do not have side-outputs}.

\myPara{HED style:}
HED-like networks \cite{xie2017holistically,liu2019richer,liu2018semantic} 
were first presented for edge detection.
Afterwards, similar ideas have been also introduced for saliency 
detection \cite{chen2018reverse,hou2019deeply}.
HED-like networks add deep supervision at the intermediate sides 
to obtain \textbf{side-output predictions}, and the final result is a linear 
combination of all side-output predictions (shown in 
\figref{fig:frame_comp}(c)). 
Unlike multi-scale feature fusion, HED performs multi-scale 
prediction fusion.

\myPara{U-Net + HED style:}
These methods combine the advantages of both U-Net and HED.
We outline this architectures in \figref{fig:frame_comp}(d).
Specifically, deep supervision is imposed at each of the convolution 
stage of U-Net decoder.
Many recent saliency models fall into this category 
\cite{zhang2018progressive,wang2018salient,liu2018picanet,islam2018revisiting,zhang2017amulet,qiu2020miniseg,liu2016dhsnet,he2017delving,wang2019salient,feng2019attentive,liu2019simple,wu2019mutual,chen2020embedding}, \cite{guan2018edge}.
They differ from each other by applying different fusion strategies. 
One notable similarity of these models is that the final prediction 
is produced by a linear aggregation of side-output predictions.
Hence the multi-scale learning is achieved \textbf{in two aspects}: 
i) the U-Net aggregates multi-level convolutional features 
from top layers to bottom layers in an encoder-decoder form;
ii) the multi-scale side-output predictions are further linearly 
aggregated for final prediction.
\textbf{Current research in this field mainly focuses on the first 
aspect}, and top-performing models have designed very 
complex feature fusion strategies for this
\cite{liu2018picanet,zhang2017amulet}.

A full literature review of salient object detection is beyond 
the scope of this paper. Please refer to 
\cite{bylinskii2018different,cong2018review,han2018advanced} 
for more comprehensive surveys.
In this paper, we focus on the second aspect of above 
\textit{U-Net + HED} multi-scale learning: the multi-scale 
side-output aggregation. 
We find that the upper bound of traditional linear side-output
prediction aggregation is limited to the side-output predictions.
Hence we propose DNA to aggregate side-output features in the 
nonlinear way, so that the aggregated hybrid features can make good 
use of the complementary multi-scale deep features.
A streamlined diagram of our proposed DNA can be seen 
in \figref{fig:frame_comp}(e).
We demonstrate DNA can achieve superior performance 
with a very simple U-Net.

\section{Revisiting Linear Side-output Aggregation} \label{sec:linear}
Deep supervision and corresponding linear side-output prediction 
aggregation have been demonstrated to be effective in many vision tasks 
\cite{xie2017holistically,liu2019richer,liu2018picanet,zhang2017amulet}.
This section analyzes the natural limitation of the linear side-output 
aggregation from both theoretical and experimental perspectives. 
To the best of our knowledge, this is a novel contribution.

Suppose a deeply-supervised network has $N$ side-output prediction 
maps $\{\mathcal{O}_1,\mathcal{O}_2,\cdots,\mathcal{O}_N\}$, 
all of which are supervised by ground-truth maps 
(\figref{fig:frame_comp}(c)(d)). 
Without loss of generality, we assume the linear side-output 
aggregation is a pixel-wise convolution, \ie, $1 \times 1$
convolution. 
Hence, current linear side-output aggregation can be 
written as 
\begin{equation} \label{equ:lin_sum}
\widehat{\mathcal{O}} = \sum_{i=1}^N \bm{w}_i \cdot \mathcal{O}_i,
\end{equation}
where weights $\bm{w}_i$ of pixel-wise convolution 
can be learned.
Note that we have $\bm{w}_i \geq 0$ here. 
Otherwise, $\mathcal{O}_i$ would have negative effect to $\widehat{\mathcal{O}}$, 
so it should be excluded in the aggregation. 
To obtain the output saliency probability map, a standard 
sigmoid function $\sigma(x) = \frac{1}{1+e^{-x}}$ should be applied 
to $\widehat{\mathcal{O}}$.
The aggregated probability map becomes 
\begin{equation} \label{equ:lin_sigmoid}
\widehat{\mathcal{P}} = \sigma(\widehat{\mathcal{O}}) = \sigma(\sum_{i=1}^N \bm{w}_i \cdot \mathcal{O}_i).
\end{equation}
Similarly, we can compute side-output probability maps 
$\{\mathcal{P}_1,\mathcal{P}_2,\cdots,\mathcal{P}_N\}$.

\begin{myTheorem} \label{math:w1}
If $\Vert\bm{w}\Vert_1=1$, the mean absolute error (MAE) of 
fused output $\widehat{\mathcal{P}}$ is limited by side-output predictions. 
\end{myTheorem}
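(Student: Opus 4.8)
The plan is to exploit that the hypothesis $\Vert\bm{w}\Vert_1=1$, together with the non-negativity $\bm{w}_i\ge 0$ already argued above, makes $\widehat{\mathcal{O}}$ a genuine \emph{convex combination} (a weighted average) of the side-output logits. A weighted average can never leave the interval spanned by its inputs, the sigmoid only monotonically rescales that interval, and the binary ground truth sits outside it; combining these facts traps the fused prediction between the side-outputs at every pixel, and averaging the resulting per-pixel error bound over the image confines $\mathrm{MAE}(\widehat{\mathcal{P}})$ to a band determined entirely by the side-output predictions, which is the claim.

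Concretely I would proceed in four short steps. (i) Fix a pixel $x$. Since $\bm{w}_i\ge 0$ and $\sum_i\bm{w}_i=\Vert\bm{w}\Vert_1=1$, the value $\widehat{\mathcal{O}}(x)=\sum_i\bm{w}_i\,\mathcal{O}_i(x)$ obeys $\min_i\mathcal{O}_i(x)\le\widehat{\mathcal{O}}(x)\le\max_i\mathcal{O}_i(x)$. (ii) Apply $\sigma$, which is strictly increasing, so the ordering is preserved and $\min_i\mathcal{P}_i(x)\le\widehat{\mathcal{P}}(x)\le\max_i\mathcal{P}_i(x)$, \ie, the fused probability never escapes the interval spanned by the side-output probabilities, all of which lie in $(0,1)$. (iii) Let $\mathcal{G}$ be the binary ground truth; since $\mathcal{G}(x)\in\{0,1\}$ is an endpoint of $[0,1]$ while all probabilities lie in $(0,1)$, the map $p\mapsto|p-\mathcal{G}(x)|$ is monotone on $(0,1)$, hence over the subinterval $[\min_i\mathcal{P}_i(x),\max_i\mathcal{P}_i(x)]$ it attains its extremes at the endpoints, giving $\min_i|\mathcal{P}_i(x)-\mathcal{G}(x)|\le|\widehat{\mathcal{P}}(x)-\mathcal{G}(x)|\le\max_i|\mathcal{P}_i(x)-\mathcal{G}(x)|$. (iv) Average over the pixel domain $\Omega$:
\begin{equation}
\frac{1}{|\Omega|}\sum_{x\in\Omega}\min_i\bigl|\mathcal{P}_i(x)-\mathcal{G}(x)\bigr|\ \le\ \mathrm{MAE}(\widehat{\mathcal{P}})\ \le\ \frac{1}{|\Omega|}\sum_{x\in\Omega}\max_i\bigl|\mathcal{P}_i(x)-\mathcal{G}(x)\bigr|.
\end{equation}
Both bounds are functionals of the $\mathcal{P}_i$ alone; in particular $\widehat{\mathcal{P}}$ can never beat the pixel-wise-best combination of side-outputs, so linear aggregation adds no headroom — this is the conclusion I would state.

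The main obstacle is steps (ii)--(iii): one is tempted to use Jensen's inequality to obtain the cleaner-looking bound $\mathrm{MAE}(\widehat{\mathcal{P}})\le\sum_i\bm{w}_i\,\mathrm{MAE}(\mathcal{P}_i)$, but $\sigma$ is \emph{not} convex (convex on negative logits, concave on positive ones), so that route fails; one must argue solely from monotonicity and convex-hull containment, and one must invoke that $\mathcal{G}(x)$ is binary so that it lies outside the spanned interval — otherwise the lower bound collapses. The remaining details (degenerate pixels where all side-outputs coincide, or probabilities approaching $0$ or $1$) are routine and do not affect the inequalities.
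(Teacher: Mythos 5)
Your proposal is correct and follows essentially the same route as the paper's proof: convex-combination bounds on the logits, monotonicity of the sigmoid to transfer them to the probabilities, and endpoint extremality of $|p-\mathcal{G}(x)|$ against the binary ground truth (you merge the paper's positive/negative pixel case split into one monotonicity observation and make the final averaging over pixels explicit, but these are cosmetic differences).
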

\begin{proof}
If $\Vert\bm{w}\Vert_1=1$, it is natural to show  
\begin{equation}
\min(\mathcal{O}_i) \leq \sum_{i=1}^N \bm{w}_i \cdot \mathcal{O}_i \leq \max(\mathcal{O}_i),
\end{equation}
because $\bm{w}_i \geq 0$ as discussed above. 
Since the sigmoid function $\sigma(x)$ is monotonically increasing, 
we have
\begin{equation}
\min(\mathcal{P}_i) \leq \widehat{\mathcal{P}} \leq \max(\mathcal{P}_i).
\end{equation}
If a pixel $\bm{p}$ is positive, we have
$\text{MAE}(\widehat{\mathcal{P}})_{\bm{p}} = \vert 1-\widehat{\mathcal{P}}(\bm{p}) \vert = 1-\widehat{\mathcal{P}}(\bm{p})$ 
and $1-\max(\mathcal{P}_i)_{\bm{p}} \leq 1-\widehat{\mathcal{P}}(\bm{p}) \leq 1-\min(\mathcal{P}_i)_{\bm{p}}$, 
so that 
$\min(\text{MAE}(\mathcal{P}_i)_{\bm{p}}) \leq \text{MAE}(\widehat{\mathcal{P}})_{\bm{p}} \leq \max(\text{MAE}(\mathcal{P}_i)_{\bm{p}})$ holds.
If the pixel $\bm{p}$ is negative, we have
$\text{MAE}(\widehat{\mathcal{P}})_{\bm{p}} = \vert 0-\widehat{\mathcal{P}}(\bm{p}) \vert = \widehat{\mathcal{P}}(\bm{p})$ 
and $\min(\mathcal{P}_i)_{\bm{p}} \leq \widehat{\mathcal{P}}(\bm{p}) \leq \max(\mathcal{P}_i)_{\bm{p}}$, 
so that 
$\min(\text{MAE}(\mathcal{P}_i)_{\bm{p}}) \leq \text{MAE}(\widehat{\mathcal{P}})_{\bm{p}} \leq \max(\text{MAE}(\mathcal{P}_i)_{\bm{p}})$ holds.
%
%
Note that $\bm{w}$ usually only has $N$ ($N \leq 6$ in VGG16 
\cite{simonyan2014very} and ResNet \cite{he2016deep}) dimensions, 
so it is also difficult to make aforementioned left equality hold.
Hence traditional linear aggregation is limited in terms of MAE metric. 
However, what we expect is to break through the limitation by 
making full use of multi-scale information. 
\end{proof}

\begin{myLemma} \label{math:lemma}
If $\Vert\bm{w}\Vert_1 \neq 1$, traditional linear aggregation (as in 
\equref{equ:lin_sum} and \equref{equ:lin_sigmoid}) is equivalent to first 
applying an aggregation with $\Vert\bm{\tilde{w}}\Vert_1 = 1$ 
and then applying a monotonically increasing mapping.
\end{myLemma}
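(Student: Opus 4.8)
The plan is to prove this by a simple rescaling. Set $s := \Vert\bm{w}\Vert_1 = \sum_{i=1}^N \bm{w}_i$. Since $\bm{w}_i \ge 0$, we have $s \ge 0$, and the case $s = 0$ forces $\widehat{\mathcal{O}} \equiv 0$, a degenerate aggregation that ignores the side-outputs entirely; so I would assume $s > 0$ (and, by hypothesis, $s \neq 1$). Define the $\ell_1$-normalized weights $\tilde{\bm{w}}_i := \bm{w}_i / s$, so that $\Vert\bm{\tilde{w}}\Vert_1 = 1$, and let $\widetilde{\mathcal{O}} := \sum_{i=1}^N \tilde{\bm{w}}_i \cdot \mathcal{O}_i$ be the corresponding normalized aggregation. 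By construction $\widehat{\mathcal{O}} = s\,\widetilde{\mathcal{O}}$ at every pixel.

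Next I would pass to probability maps. Let $\widetilde{\mathcal{P}} := \sigma(\widetilde{\mathcal{O}})$ be the probability map produced by the normalized aggregation (this is exactly the object to which Theorem~\ref{math:w1} applies). Because $\sigma$ is a strictly increasing bijection $\mathbb{R}\to(0,1)$ whose inverse is the logit, we have $\widetilde{\mathcal{O}} = \sigma^{-1}(\widetilde{\mathcal{P}})$, and hence, using \equref{equ:lin_sigmoid},
\[ \widehat{\mathcal{P}} = \sigma(\widehat{\mathcal{O}}) = \sigma(s\,\widetilde{\mathcal{O}}) = \sigma\bigl(s\,\sigma^{-1}(\widetilde{\mathcal{P}})\bigr) =: \phi(\widetilde{\mathcal{P}}). \]
This writes the original linear aggregation's output as the image of the normalized aggregation's output under the fixed scalar map $\phi(t) = \sigma\bigl(s\,\sigma^{-1}(t)\bigr)$, applied pixelwise; equivalently, at the logit level the extra map is simply $t \mapsto \sigma(s t)$ applied before the sigmoid is folded in.

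Finally I would verify that $\phi$ is monotonically increasing: it is a composition of three monotonically increasing maps — the logit $\sigma^{-1}$ on $(0,1)$, multiplication by the positive scalar $s$, and $\sigma$ — so $\phi$ is monotonically increasing on $(0,1)$. This is precisely the claimed structure: an aggregation with $\Vert\bm{\tilde{w}}\Vert_1 = 1$ followed by a monotonically increasing mapping. I do not anticipate a genuine obstacle here; the only points needing care are isolating the degenerate $s = 0$ case and pinning down the meaning of ``equivalent'' — namely that both constructions produce the identical final probability map on every input. As a remark, since $\phi$ is order-preserving it commutes with $\min$ and $\max$ over $i$, so the bound of Theorem~\ref{math:w1} transfers to the general case with each $\mathcal{P}_i$ replaced by $\phi(\mathcal{P}_i)$; thus unconstrained linear aggregation is subject to the same fundamental limitation.
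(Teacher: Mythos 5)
Your proof is correct and follows essentially the same route as the paper: normalize the weights by $s=\Vert\bm{w}\Vert_1$ so that $\Vert\bm{\tilde{w}}\Vert_1=1$, and absorb the leftover scalar into a monotonically increasing map. The only (harmless) difference is that you place the monotone map at the probability level, $\phi(t)=\sigma\bigl(s\,\sigma^{-1}(t)\bigr)$ acting on $\widetilde{\mathcal{P}}$, whereas the paper keeps it at the logit level as $x\mapsto\sigma(s\,x)$; your explicit treatment of the degenerate case $s=0$ is a small improvement in rigor.
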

\begin{proof}
If $\Vert\bm{w}\Vert_1 \neq 1$, we set 
$\bm{w} = \bm{\tilde{w}} \cdot \Vert\bm{w}\Vert_1$, so we have 
$\Vert\bm{\tilde{w}}\Vert_1 = 1$ .
The computation of $\widehat{\mathcal{P}}$ becomes 
\begin{equation}
\widehat{\mathcal{P}} = \sigma(\Vert\bm{w}\Vert_1 \cdot \sum_{i=1}^N \bm{\tilde{w}}_i \cdot \mathcal{O}_i),
\end{equation}
in which $\sigma(\Vert\bm{w}\Vert_1 \cdot x)$ $(\Vert\bm{w}\Vert_1 > 0)$ 
is a monotonically increasing function in terms of $x$. 
\end{proof}

\begin{figure}
    \centering
    \includegraphics[width=.7\linewidth]{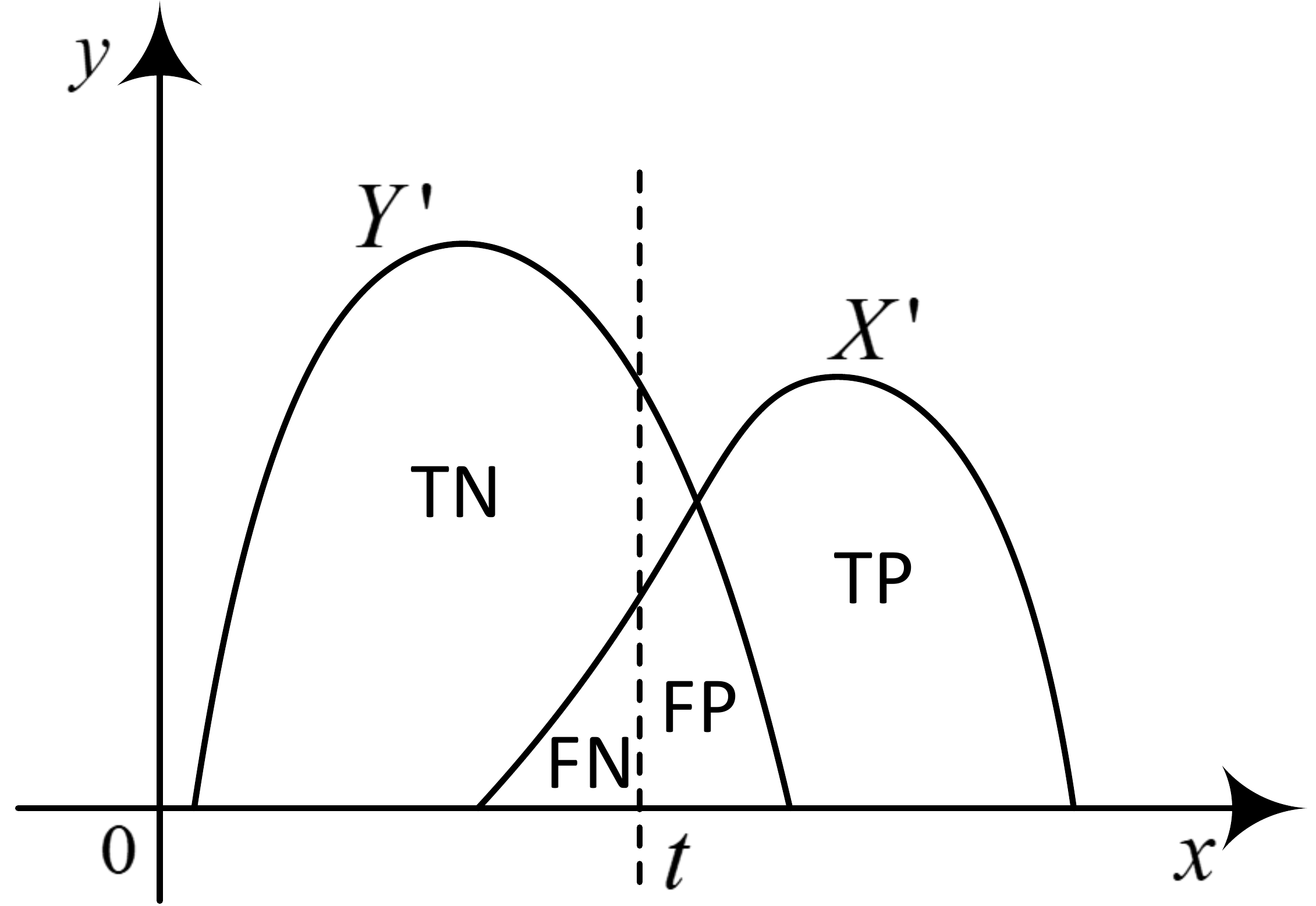} \\
    \caption{Probability ($x$ axis) \vs the density of $X'$ and $Y'$ 
    ($y$ axis). TN: true negative; FN: false negative; TP: true 
    positive; FP: false positive. 
    } \label{fig:math}
\end{figure}

\begin{myTheorem} \label{math:roc}
The monotonically increasing mapping of 
$\sigma(\Vert\bm{w}\Vert_1 \cdot x)$ $(\Vert\bm{w}\Vert_1 > 0)$ 
cannot change the ROC curve and AUC metric\footnote{AUC is the 
area under the ROC Curve.}.
\end{myTheorem}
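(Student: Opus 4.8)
The plan is to use the classical fact that a ROC curve is invariant under any \emph{strictly} increasing reparametrization of the classifier score, and to verify that $g(x):=\sigma(\Vert\bm{w}\Vert_1\cdot x)$ is exactly such a reparametrization when $\Vert\bm{w}\Vert_1>0$. First I would fix notation at the pixel level: let $s=s(\bm{p})$ be the scalar score assigned to a pixel $\bm{p}$ by a fixed linear aggregation (the quantity $\sum_i\bm{\tilde w}_i\cdot\mathcal{O}_i$ appearing in Lemma~\ref{math:lemma}), so that the aggregated probability map is $g(s)$, and recall that $g$ is precisely the monotone mapping produced by that lemma. For a threshold $\tau$, the predicted-positive set in the $s$-scale is $\{\bm{p}:s(\bm{p})>\tau\}$, the ROC curve is the locus of points $(\mathrm{FPR}(\tau),\mathrm{TPR}(\tau))$ obtained as $\tau$ ranges over $\mathbb{R}$, and $\mathrm{AUC}$ is the area beneath it.

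Second, I would establish the threshold correspondence. Since $\Vert\bm{w}\Vert_1>0$ and $\sigma$ is strictly increasing, $g$ is strictly increasing, hence a bijection of $\mathbb{R}$ onto the open interval $(0,1)$; in particular $g(s)>g(\tau)\iff s>\tau$. Therefore, for every $\tau\in\mathbb{R}$ and $\tau':=g(\tau)\in(0,1)$, the predicted-positive sets coincide, $\{\bm{p}:g(s(\bm{p}))>\tau'\}=\{\bm{p}:s(\bm{p})>\tau\}$, so the whole confusion matrix and hence the ROC point $(\mathrm{FPR}(\tau'),\mathrm{TPR}(\tau'))$ of the transformed scores equals $(\mathrm{FPR}(\tau),\mathrm{TPR}(\tau))$ of the original scores. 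As $\tau$ sweeps $\mathbb{R}$, $\tau'$ sweeps $(0,1)$ and traces out exactly the same curve; the remaining thresholds $\tau'\le 0$ and $\tau'\ge 1$ lie outside the range of $g$ and only reproduce the endpoints $(1,1)$ and $(0,0)$, which already belong to the curve. Consequently the two ROC curves are equal as subsets of the unit square, and the areas they enclose — the AUC values — are equal.

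As a cross-check along the lines of \figref{fig:math}, I would also give the rank-based argument: by the Mann--Whitney identity, $\mathrm{AUC}=\Pr[S_+>S_-]+\tfrac12\Pr[S_+=S_-]$, where $S_+$ and $S_-$ are the scores of a uniformly random positive and negative pixel. Because $g$ is strictly increasing, $g(S_+)>g(S_-)\iff S_+>S_-$ and $g(S_+)=g(S_-)\iff S_+=S_-$, so applying $g$ changes neither term; the figure merely illustrates that the class-conditional densities of $g(S_-)$ and $g(S_+)$ (the curves labelled $X'$ and $Y'$) are monotone warps of the originals, and such warps leave the overlap statistic that AUC measures untouched.

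The step that needs the most care is the bookkeeping at the boundary of $g$'s range: since $\sigma$ saturates, the transformed scores never attain $0$ or $1$, so one must argue that the thresholds $\tau'\notin(0,1)$ add nothing to the transformed ROC curve beyond its two endpoints — equivalently, that the original curve already satisfies $\lim_{\tau\to-\infty}(\mathrm{FPR}(\tau),\mathrm{TPR}(\tau))=(1,1)$ and $\lim_{\tau\to+\infty}(\mathrm{FPR}(\tau),\mathrm{TPR}(\tau))=(0,0)$. It is also worth emphasizing that \emph{strict} monotonicity is what is actually used, which is exactly why the hypothesis $\Vert\bm{w}\Vert_1>0$ (rather than $\ge 0$) appears; a mapping that is flat on some interval, as happens when $\Vert\bm{w}\Vert_1=0$, could collapse distinct thresholds and is not covered by this argument.
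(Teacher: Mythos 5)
Your proof is correct and follows essentially the same route as the paper's: both arguments rest on the observation that $g(x)=\sigma(\Vert\bm{w}\Vert_1\cdot x)$ with $\Vert\bm{w}\Vert_1>0$ is a strictly increasing bijection of $\mathbb{R}$ onto $(0,1)$, so thresholding the transformed scores at $g(\tau)$ produces the same predicted-positive sets --- and hence the same $(\mathrm{TPR},\mathrm{FPR})$ pairs, written in the paper as $(1-F(\varphi^{-1}(t)),1-G(\varphi^{-1}(t)))$ --- as thresholding the original scores at $\tau$, whence the ROC curve is unchanged as a point set and so is its area. Your write-up is in fact slightly more careful at the boundary of the range of $g$ and, via the Mann--Whitney identity, also covers tied or discrete scores, whereas the paper assumes $F$ and $G$ continuous and additionally verifies AUC invariance by the explicit change of variables $\int_0^1 G(\varphi^{-1}(t))\,dF(\varphi^{-1}(t))=\int_{-\infty}^{+\infty}G(x)\,dF(x)$.
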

\begin{proof}
Suppose the predicted scores of positive samples obey the 
distribution of $X \sim F(x)$, while the predicted scores of negative 
samples obey the distribution of $Y \sim G(x)$. We may assume $F$ and $G$ are continuous functions.
$\varphi(x) = \sigma(k \cdot x)$ $(k>0)$ is a variant of sigmoid function, 
so we have $\varphi: \mathbb{R} \to (0, 1)$ and $\varphi$ is a monotonically 
increasing function. 
Let $X' = \varphi(X)$ and $Y' = \varphi(Y)$ be two transformed 
distributions. 
It is easy to show 
\begin{equation}
\begin{aligned}
\mathbb{P}(X' \leq u) = \mathbb{P}(\varphi(X) \leq u) &= \mathbb{P}(X \leq \varphi^{-1}(u)) \\
&= F(\varphi^{-1}(u)),
\end{aligned}
\end{equation}
and thus we can obtain $X' \sim F(\varphi^{-1}(x))$ 
and $Y' \sim G(\varphi^{-1}(x))$.

\begin{table*}[!tb]
\centering
\renewcommand{\tabcolsep}{3.3mm}
\caption{Comparison between linear side-output prediction aggregation (\ie, 
lin) and nonlinear side-output feature aggregation (\ie, nonlin). 
The datasets and metrics will be introduced in \secref{sec:experiment_setup}.
The linear aggregation of HED \cite{xie2017holistically} and 
DSS \cite{hou2019deeply} follows the original papers, and their nonlinear 
aggregation replaces linear aggregation with the proposed DNA.
} \label{tab:lin_vs_nonlin}
\begin{tabular}{c|c||c|c|c|c|c|c|c|c|c|c} \Xhline{1.0pt}
    \multirow{2}*{Methods} & \multirow{2}*{Fusion} 
    & \multicolumn{2}{c|}{DUTS-TE} & \multicolumn{2}{c|}{ECSSD} 
    & \multicolumn{2}{c|}{HKU-IS} & \multicolumn{2}{c|}{DUT-O} 
    & \multicolumn{2}{c}{THUR15K}
    \\ \cline{3-12}
    & & $F_\beta$ & MAE & $F_\beta$ & MAE & $F_\beta$ & MAE 
    & $F_\beta$ & MAE & $F_\beta$ & MAE \\ \hline
    \multirow{2}*{HED \cite{xie2017holistically}}
    & linear & 0.796 & 0.079 & 0.892 & 0.065 & 0.893 & 0.052 
    & 0.726 & 0.100 & 0.757 & 0.099 \\ \cline{2-12}
    & nonlinear & \textbf{0.827} & \textbf{0.057} & \textbf{0.911} 
    & \textbf{0.053} & \textbf{0.912} & \textbf{0.039} & \textbf{0.752} 
    & \textbf{0.078} & \textbf{0.775} & \textbf{0.083} \\ \hline
    \multirow{2}*{DSS \cite{hou2019deeply}}
    & linear & 0.827 & 0.056 & 0.915 & 0.056 & 0.913 & 0.041 
    & 0.774 & 0.066 & 0.770 & 0.074 \\ \cline{2-12}
    & nonlinear & \textbf{0.833} & \textbf{0.055} & \textbf{0.918} 
    & \textbf{0.056} & \textbf{0.916} & \textbf{0.040} & \textbf{0.784} 
    & \textbf{0.060} & \textbf{0.773} & \textbf{0.072} \\ \hline
    \multirow{2}*{DNA} & linear & 0.844 & 0.048 & 0.921 & 0.050 
    & 0.917 & 0.034 & 0.765 & 0.066 & 0.785 & 0.071 \\ \cline{2-12} 
    & nonlinear & \textbf{0.865} & \textbf{0.044} & \textbf{0.935} 
    & \textbf{0.041} & \textbf{0.930} & \textbf{0.031} & \textbf{0.799} 
    & \textbf{0.056} & \textbf{0.793} & \textbf{0.069} \\ \Xhline{1.0pt}
\end{tabular}
\end{table*}

Let $t$ be a threshold, true positive rate (TPR) and false positive 
rate (FPR) can be computed as 
\begin{equation}
\begin{aligned}
\text{TPR} = \frac{\text{TP}}{\text{TP}+\text{FN}} = \mathbb{P}(X' > t) = 1 - F(\varphi^{-1}(t)), \\
\text{FPR} = \frac{\text{FP}}{\text{FP}+\text{TN}} = \mathbb{P}(Y' > t) = 1 - G(\varphi^{-1}(t)),
\end{aligned}
\end{equation}
as shown in \figref{fig:math}.
Hence we can denote the ROC curve as 
$\{(1 - F(\varphi^{-1}(t)), 1 - G(\varphi^{-1}(t))): t \in (0, 1)\}$.
It is easy to show that as $t$ goes from $0$ to $1$ continuously, $(1 - F(\varphi^{-1}(t)), 1 - G(\varphi^{-1}(t)))$ will change from $(1, 1)$ to $(0, 0)$ continuously and monotonically.  
It is also obvious to see that 
$\{(1 - F(\varphi^{-1}(t)), 1 - G(\varphi^{-1}(t)))\}$ 
and $\{(F(\varphi^{-1}(t)), G(\varphi^{-1}(t)))\}$ are symmetric about 
the point $(\frac{1}{2},\frac{1}{2})$.
Suppose the area under the 
curve $\{(1 - F(\varphi^{-1}(t)), 1 - G(\varphi^{-1}(t)))\}$ is $S_1$, and the area under 
the curve $\{(F(\varphi^{-1}(t)), G(\varphi^{-1}(t)))\}$ is $S_2$.
By symmetry, we have $S_1 + S_2 = 1$.

With the above conclusions, we can compute $S_2$ as 
\begin{equation}
\begin{aligned}
S_2 &= \int_0^1 G(\varphi^{-1}(t)) d F(\varphi^{-1}(t)) \\
&= \int_{-\infty}^{+\infty} G(x) d F(x).
\end{aligned}
\end{equation}
Therefore, $S_2$ is independent of the specific form of the 
function $\varphi(x)$, and $S_1 = 1-S_2$ is independent of 
$\varphi(x)$, too.
Moreover, as $t$ ranges in $(0, 1)$, thus $\varphi^{-1}(t)$ ranges in $\mathbb{R}$.
We have 
\begin{equation}
\begin{aligned}
&\{(1 - F(\varphi^{-1}(t)), 1 - G(\varphi^{-1}(t))): t \in (0, 1)\} \\
&=\{(1-F(x), 1-G(x)): x \in \mathbb{R}\},
\end{aligned}
\end{equation}
which is also independent of the form of $\varphi(x)$.
When $F(x)$ and $G(x)$ are discrete, the set 
$\{(1 - F(\varphi^{-1}(t)), 1 - G(\varphi^{-1}(t))): t \in (0, 1)\}$
is discrete but still independent of $\varphi(x)$.
Therefore, we can conclude that $\varphi(x)$ cannot change 
the ROC curve and AUC metric.
\end{proof}

Similar to the proof for Theorem \ref{math:w1}, we can easily 
demonstrate that the first step in Lemma \ref{math:lemma},
\ie, linear aggregation with $\Vert\bm{\tilde{w}}\Vert_1 = 1$, 
has limited MAE results. 
From Theorem \ref{math:roc}, we know the second step in 
Lemma \ref{math:lemma}, \ie, a monotonically increasing mapping,
cannot change the ROC curve and AUC value. 
Therefore, we can conclude that traditional linear aggregation 
with $\Vert\bm{w}\Vert_1 \neq 1$ has limited improvement. 
Combined with Theorem \ref{math:w1}, we can conclude that 
linear aggregation of side-outputs only has limited improvement.

Besides the theoretical proofs, we also perform experiments to 
compare linear aggregation versus nonlinear aggregation 
for salient object detection. 
To this end, we use the proposed nonlinear side-output feature 
aggregation (in \secref{sec:dna}) for nonlinear regression to 
evaluate two well-known models: HED \cite{xie2017holistically} 
and DSS \cite{hou2019deeply}, and the proposed DNA model.
The results are summarized in \tabref{tab:lin_vs_nonlin}.
We can see significant improvement from linear regression to 
nonlinear regression. 
Based on this observation, this paper aims at designing 
\textbf{a simple network with nonlinear side-output aggregation} 
for effective salient object detection.

\begin{figure}[!tb]
    \centering
    \includegraphics[width=\linewidth]{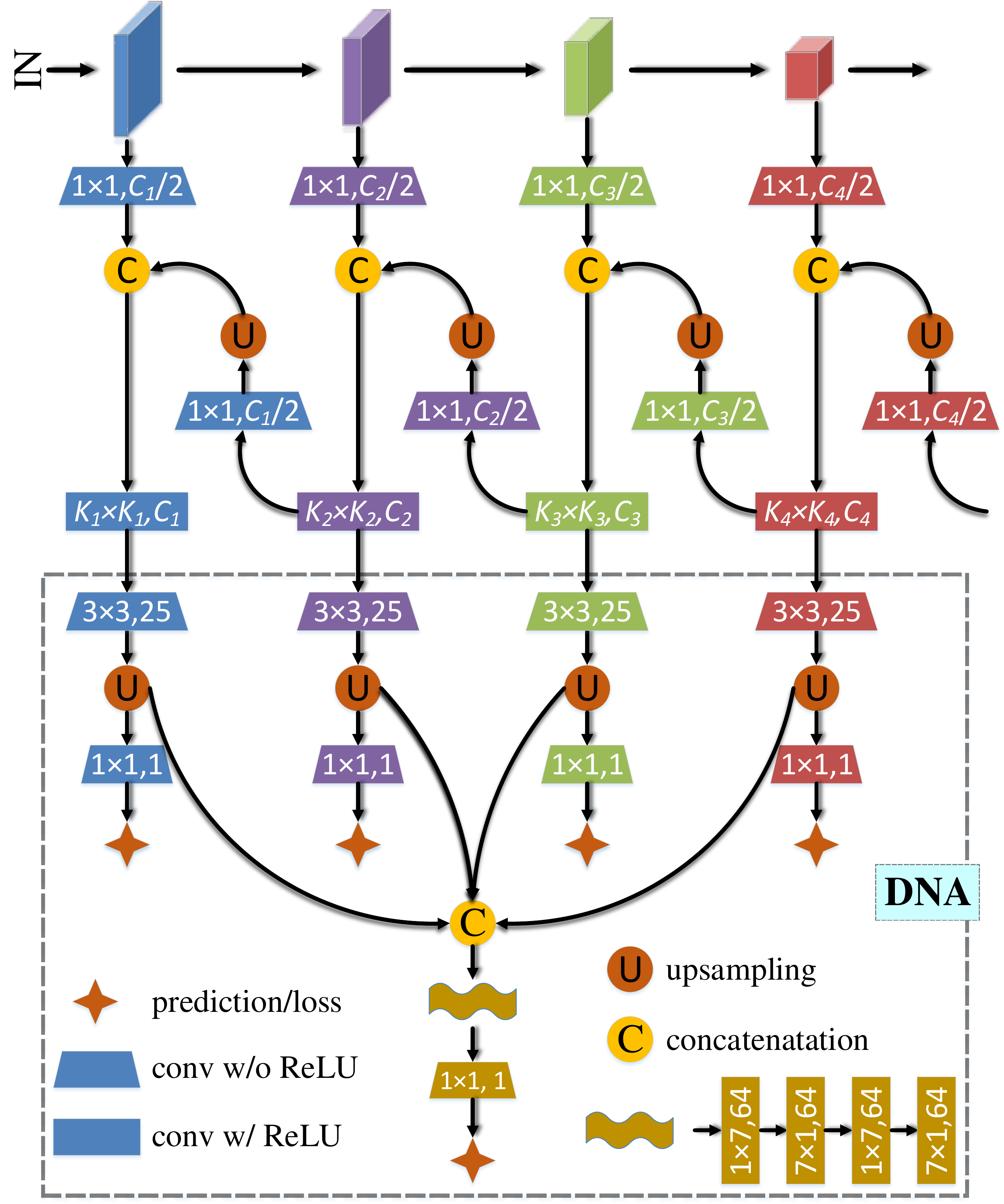} \\
    \caption{Network architecture. We only illustrate the first four 
    network sides in this figure, and the other two can be constructed 
    in the same way. The proposed DNA module is in the dotted box. 
    The parameters $K_i \times K_i$ and $C_i$ are introduced in the text.
    } \label{fig:framework}
\end{figure}

\section{Methodology}
In this section, we will elaborate our proposed framework 
for salient object detection. 
We first introduce our base network in \secref{sec:basenet}.
Then, we present the deeply-supervised nonlinear aggregation  
in \secref{sec:dna}.
An overall network architecture is illustrated 
in \figref{fig:framework}.

\subsection{Base Network} \label{sec:basenet}
\noindent\textbf{Backbone network.}
To tackle the salient object detection, we follow recent studies
\cite{chen2017look,wang2017stagewise,hou2019deeply}
to use fully convolutional networks.
Specifically, we use the well-known VGG16 network 
\cite{simonyan2014very} as our backbone network, whose final  
fully connected layers are removed to serve for image-to-image 
translation.
Salient object detection usually requires global information to 
locate the coarse positions of salient objects \cite{cheng2015global},
so enlarging the receptive field of the network would be helpful.
To this end, we keep the final pooling layer of VGG16 as in 
\cite{hou2019deeply} and replace the last two fully connected 
layers with two convolution layers, one of which has the kernel size 
of $3 \times 3$ with $C_6^{(1)}=192$ channels and another of which has 
the kernel size of $7 \times 7$ with $C_6^{(2)}=128$ channels.
Here, we use the $3 \times 3$ convolution layer to reduce the 
feature channels, because large kernel sizes (\eg, $7 \times 7$) 
lead to much more parameters.

There are five pooling layers in the backbone network.
They divide the convolution layers into six convolution 
blocks, which are denoted as $\{\mathcal{S}^1,
\mathcal{S}^2,\mathcal{S}^3,\mathcal{S}^4,\mathcal{S}^5, 
\mathcal{S}^6\}$ from bottom to top, respectively.
We consider $\mathcal{S}^6$ as the top valve that controls the 
overall contextual information that flows in the network.
The resolution of the feature maps in each convolution block is 
half of the preceding one.
Following \cite{hou2019deeply,xie2017holistically}, the 
side-output of each convolution block is connected from the last
layer of this block.

\myPara{Encoder-decoder network.}
Based on the backbone net, we build an encoder-decoder network
that can be seen in \figref{fig:framework}.
Concretely, we connect a $1 \times 1$ convolution layer to each of 
the convolution blocks $\mathcal{S}^6$ and $\mathcal{S}^5$ to 
adjust the number of channels (as shown in \tabref{tab:net_config}).
Then, we upsample the obtained feature maps from $\mathcal{S}^6$ 
by two. 
The upsampled feature maps and resulting feature maps from 
$\mathcal{S}^5$ are concatenated.
To fuse the concatenated feature maps, two sequential convolution 
layers are used to generate the decoder side
$\widetilde{\mathcal{S}}^5$.
The decoder sides 
$\{\widetilde{\mathcal{S}}^4,\widetilde{\mathcal{S}}^3,
\widetilde{\mathcal{S}}^2,\widetilde{\mathcal{S}}^1\}$ 
can be obtained in the same manner. 
For a clear presentation, we formulate the above process as follows  
\begin{equation}
\begin{aligned}
\widetilde{\mathcal{S}}^i \hspace{0.018\linewidth} &=\varphi({\rm Concat}
(\phi_1(\mathcal{S}^i),\phi_2(\widetilde{\mathcal{S}}^{i+1}))), \\
\phi_1(\cdot) &= {\rm Conv}(\cdot), \\
\phi_2(\cdot) &= {\rm Upsample}({\rm Conv}(\cdot)), \\
\varphi(\cdot) \hspace{0.008\linewidth} &= {\rm ReLU}({\rm Conv}(\cdot)), \\
\forall i \hspace{0.02\linewidth} &\in \{1,2,3,4,5\}.
\end{aligned}
\end{equation}
Note that we have $\widetilde{\mathcal{S}}^6 = \mathcal{S}^6$,
because $\mathcal{S}^6$ is the last block in the encoder path 
and also the first block in the decoder path. 
In this way, the proposed encoder-decoder lets top contextual 
information flow into the lower layers, so the lower layers are 
expected to emphasize the details of salient objects in an image.
Here, both two sequential convolution layers ($\varphi(\cdot)$) at the  
decoder side  $\widetilde{\mathcal{S}}^i$ are with kernel size 
of $K_i \times K_i$ and output channels of $C_i$.
We will discuss the parameter settings in detail in the experiment part.

\begin{table}[!tb]
\centering
\renewcommand{\tabcolsep}{4mm}
\caption{Network configurations.} \label{tab:net_config}
\begin{tabular}{c|c|c|c} \Xhline{1.0pt}
Side & $C_i$ & $K_i \times K_i$ & Resolution \\ \hline
Side-output 1 & 64 & $3 \times 3$ & 1 \\ \hline
Side-output 2 & 128 & $3 \times 3$ & 1/2\\ \hline
Side-output 3 & 128 & $5 \times 5$ & 1/4 \\ \hline
Side-output 4 & 128 & $5 \times 5$ & 1/8 \\ \hline
Side-output 5 & 128 & $5 \times 5$ & 1/16 \\ \hline
Side output 6 & - & - & 1/32 \\ \Xhline{1.0pt}
\end{tabular}
\end{table}

\subsection{Deeply-supervised Nonlinear Aggregation} \label{sec:dna}
Instead of linearly aggregating side-output predictions at multiple 
sides as in previous literature 
\cite{zhang2018progressive,wang2018salient,liu2018picanet,islam2018revisiting,zhang2017amulet,liu2016dhsnet,he2017delving},
we propose to aggregate the side-output features in a nonlinear way.
The proposed DNA module is displayed in the dotted box 
of \figref{fig:framework}.
Specifically, we first adopt a $3 \times 3$ convolution for each 
$\widetilde{\mathcal{S}}^i$ to adjust the number of channels.
Then, the feature maps are upsampled into the same size of 
the original image to generate \textbf{side-output features}. 
The side-output  features can predict saliency maps using a 
simple $1 \times 1$ convolution. 
In the training phase, deep supervision is added for these 
predicted maps.

We concatenate all side-output features to construct hybrid features
that contain rich multi-scale and multi-level information.
One of the key ideas in our nonlinear aggregation is that we 
use asymmetric convolution that decomposes a standard 
two-dimensional convolution into two one-dimensional convolutions.
That is to say, a $n \times n$ convolution is decomposed into two 
sequential convolutions with kernel sizes of $1 \times n$ and 
$n \times 1$.
Here, the reasons why we use asymmetric convolution are twofold.
On one hand, in the experiments, we find large kernel size in the 
DNA module can improve performance, and we believe it is 
because hybrid feature maps have large resolution, \ie, the same 
resolution as the original image.
On the other hand, convolutions with large kernel sizes are very 
time-consuming for large feature maps.
According to the above analyses, we set $n=7$ for 
asymmetric convolutions rather than small kernel sizes. 
Larger kennel sizes than $n=7$ will only lead to little 
accuracy improvement while causing more computational load.
The effectiveness of this choice has been validated 
in \secref{sec:ablation} where we try different settings 
of $n$ and asymmetric/standard convolutions.
We use two groups of asymmetric convolutions, each of which 
consists of a $1 \times 7$ and a $7 \times 1$ convolution.
With a $300 \times 300$ input image, the number of FLOPs
(multiply-adds) for these asymmetric convolutions is 13.8G, 
while the number of FLOPs will be 60.4G if we use the standard
two-dimensional $7 \times 7$ convolutions.
At last, we connect a $1 \times 1$ convolution after the asymmetric 
convolutions to predict the final saliency maps.

\begin{table*}[!tb]
\centering
\renewcommand{\tabcolsep}{3.3mm}
\caption{Comparison between the proposed DNA and 16 competitors in terms 
of the metrics of $F_\beta$ and MAE on six datasets. 
We report results on both VGG16 \cite{simonyan2014very} backbone 
and ResNet-50 \cite{he2016deep} backbone.
The top three models in each column are highlighted in \first{red},
\second{green} and \third{blue}, respectively. 
For ResNet-50 based methods, we only highlight the top performance.
} \label{tab:eval_fb_mae}
\begin{tabular}{c||c|c|c|c|c|c|c|c|c|c|c|c} \Xhline{1.0pt}
	\multirow{2}*{Methods} & \multicolumn{2}{c|}{DUTS-TE} 
	& \multicolumn{2}{c|}{ECSSD} & \multicolumn{2}{c|}{HKU-IS} 
	& \multicolumn{2}{c|}{DUT-O} & \multicolumn{2}{c|}{SOD} 
	& \multicolumn{2}{c}{THUR15K} \\ \cline{2-13}
    & $F_\beta$ & MAE & $F_\beta$ & MAE & $F_\beta$ & MAE
    & $F_\beta$ & MAE & $F_\beta$ & MAE & $F_\beta$ & MAE
    \\ \hline
    \multicolumn{13}{c}{Non-deep learning} 
    \\ \hline
    DRFI \cite{jiang2013salient} & 0.649 
    & 0.154 & 0.777 & 0.161 & 0.774 & 0.146 & 0.652 & 0.138 & 0.704 
    & 0.217 & 0.670 & 0.150 
    \\ \hline
    \multicolumn{13}{c}{VGG16 \cite{simonyan2014very} backbone} 
    \\ \hline
    MDF \cite{li2015visual} & 0.707 & 0.114 & 0.807 
    & 0.138 & - & - & 0.680 & 0.115 & 0.764 & 0.182 & 0.669 & 0.128 
    \\
    LEGS \cite{wang2015deep} & 0.652 & 0.137 
    & 0.830 & 0.118 & 0.766 & 0.119 & 0.668 & 0.134 & 0.733 & 0.194 
    & 0.663 & 0.126 
    \\
    DCL \cite{li2016deep} & 0.785 & 0.082 & 0.895 & 0.080 
    & 0.892 & 0.063 & 0.733 & 0.095 & 0.831 & 0.131 & 0.747 & 0.096 
    \\
    DHS \cite{liu2016dhsnet} & 0.807 & 0.066 & 0.903 
    & 0.062 & 0.889 & 0.053 & - & -  & 0.822 & 0.128 & 0.752 & 0.082 
    \\
    ELD \cite{lee2016deep} & 0.727 & 0.092 & 0.866 & 0.081 
    & 0.837 & 0.074 & 0.700 & 0.092 & 0.758 & 0.154 & 0.726 & 0.095 
    \\
    RFCN \cite{wang2016saliency} & 0.782 & 0.089 & 0.896 
    & 0.097 & 0.892 & 0.080 & 0.738 & 0.095 & 0.802 & 0.161 & 0.754 & 0.100 
    \\
    NLDF \cite{luo2017non} & 0.806 & 0.065 & 0.902 
    & 0.066 & 0.902 & 0.048 & 0.753 & 0.080 & 0.837 & 0.123 & 0.762 & 0.080 
    \\
    DSS \cite{hou2019deeply} & 0.827 & \third{0.056} & 0.915 
    & \third{0.056} & \third{0.913} & \second{0.041} & \third{0.774} 
    & \third{0.066} & \third{0.842} & \third{0.122} & 0.770 & \second{0.074} 
    \\
    Amulet \cite{zhang2017amulet} & 0.778 & 0.085 & 0.913 
    & 0.061 & 0.897 & 0.051 & 0.743 & 0.098 & 0.795 & 0.144 & 0.755 & 0.094 
    \\
    UCF \cite{zhang2017learning} & 0.772 & 0.112 & 0.901 
    & 0.071 & 0.888 & 0.062 & 0.730 & 0.120 & 0.805 & 0.148 & 0.758 & 0.112 
    \\
    PiCA \cite{liu2018picanet} & \second{0.837} & \second{0.054} 
    & \second{0.923} & \second{0.049} & \second{0.916} & \third{0.042} 
    & 0.766 & 0.068 & 0.836 & \first{0.102} & \second{0.783} & 0.083 
    \\
    C2S \cite{li2018contour} & 0.811 & 0.062 & 0.907 
    & 0.057 & 0.898 & 0.046 & 0.759 & 0.072 & 0.819 & \third{0.122} 
    & \third{0.775} & 0.083 
    \\
    RAS \cite{chen2018reverse}  
    & \third{0.831} & 0.059 & \third{0.916} & 0.058 & \third{0.913} 
    & 0.045 & \second{0.785} & \second{0.063} & \second{0.847} & 0.123 
    & 0.772 & \third{0.075} 
    \\
    \textbf{DNA} & \first{0.865} 
    & \first{0.044} & \first{0.935} & \first{0.041} & \first{0.930} 
    & \first{0.031} & \first{0.799} & \first{0.056} & \first{0.853} 
    & \second{0.107} & \first{0.793} & \first{0.069}
    \\ \hline
    \multicolumn{13}{c}{ResNet-50 \cite{he2016deep} backbone} 
    \\ \hline
    SRM \cite{wang2017stagewise} & 0.826 & 0.059 & 0.914 
    & 0.056 & 0.906 & 0.046 & 0.769 & 0.069 & 0.840 & 0.126 & 0.778 & 0.077 
    \\
    BRN \cite{wang2018detect} & 0.827 & 0.050 & 0.919 
    & 0.043 & 0.910 & 0.036 & 0.774 & 0.062 & 0.843 & \first{0.103} 
    & 0.769 & 0.076 
    \\ 
    PiCA \cite{liu2018picanet} & 0.853 & 0.050 & 0.929 
    & 0.049 & 0.917 & 0.043 & 0.789 & 0.065 & 0.852 & \first{0.103} 
    & 0.788 & 0.081 
    \\
    \textbf{DNA} & \first{0.873} 
    & \first{0.040} & \first{0.938} & \first{0.040} & \first{0.934} 
    & \first{0.029} & \first{0.805} & \first{0.056} & \first{0.855} 
    & 0.110 & \first{0.796} & \first{0.068}
    \\ \Xhline{1.0pt}
\end{tabular}
\end{table*}

In training, we adopt class-balanced cross-entropy loss 
\cite{xie2017holistically} to supervise all side-output 
and final fused predictions.
Since convolutions in the DNA module are followed by nonlinear 
activation (\ie, ReLU), the aggregation of 
side-output features is nonlinear.
Although there are several nonlinear functions that can be used, 
such as ReLU, PReLU, and LeakyReLU, in this paper, 
we simply use the most common ReLU function to demonstrate 
the necessity of nonlinear side-output aggregation.
The traditional linear side-output prediction aggregation can only 
linearly combine multi-scale predictions, while the proposed 
nonlinear side-output feature aggregation can make use of the 
complementary multi-scale features for final prediction and  
is thus more effective.
With the simple encoder-decoder in \secref{sec:basenet},
DNA performs favorably against previous methods. 
Note that previous methods 
\cite{zhang2018progressive,wang2018salient,liu2018picanet,islam2018revisiting}
usually present various network architectures, modules, 
and operations to improve performance, but in this paper, 
DNA only applies a simply-modified U-Net as base network.

\section{Experiments}
\subsection{Experimental Setup} \label{sec:experiment_setup}
\noindent\textbf{Implementation details.}
The detailed configurations for $K_i$ and $C_i$ can be found 
in \tabref{tab:net_config}.
The large kernel size at top sides is helpful to accuracy.
When $i=1,2$, $K_i \times K_i$ equals to $3 \times 3$;
When $i=3,4,5$, $K_i \times K_i$ equals to $5 \times 5$.
The $C_i$ values for $i=1,\cdots,5$ are 64, 128, 128, 128 and 128, 
respectively.
Since side-output prediction results have not been used, 
we remove these side-output prediction units in the test phase.
However, we remain them in the training phase, because 
deep supervision can help the training and improve the accuracy 
of the final saliency prediction, as demonstrated 
in \secref{sec:ablation}.

We implement our network using the well-known 
Caffe \cite{jia2014caffe} framework.
The convolution layers in the original VGG16 
\cite{simonyan2014very} are initialized using 
the pretrained ImageNet model \cite{deng2009imagenet}.
The weights of other layers are initialized from the zero-mean 
Gaussian distribution with standard deviation 0.01. 
Biases are initialized to 0.
The upsampling operations are implemented by deconvolution 
layers with frozen bilinear interpolation kernels.
Since the deconvolution layers do not need training, we 
exclude them when computing the number of parameters.
The network is optimized using SGD with learning rate policy
of \textit{poly}, in which the current learning rate equals 
the base one multiplying $(1-curr\_iter/max\_iter)^{power}$.
The hyper parameters $power$ and $max\_iter$ are set to 0.9 
and 20000, respectively, so that the training takes 20000 
iterations in total.
The initial learning rate is set to 1e-7 that is the 
maximum value to keep the network from training exploding 
(\ie, larger values will cause the well-known ``Nan'' error).
We follow previous saliency detection methods 
\cite{hou2019deeply,zhang2018progressive,liu2018picanet,chen2018reverse,liu2016dhsnet,lee2016deep,wang2018detect,zhang2017supervision,zhang2018deep,wang2015deep,chen2016disc,liu2018deep,chen2017look,li2016deep,su2019selectivity,liu2019simple}
to set the momentum and weight decay to the typical values 
of 0.9 and 0.0005 \cite{krizhevsky2012imagenet,simonyan2014very},
respectively.
All experiments are performed on a TITAN Xp GPU.

\myPara{Datasets.}
We extensively evaluate our method on six popular datasets,
including DUTS \cite{wang2017learning}, 
ECSSD \cite{yan2013hierarchical}, 
SOD \cite{movahedi2010design}, 
HKU-IS \cite{li2015visual}, 
THUR15K \cite{cheng2014salientshape} 
and DUT-O (or DUT-OMRON) \cite{yang2013saliency}.
These six datasets consist of 15572, 1000, 300, 4447, 6232 
and 5168 natural complex images with corresponding pixel-wise 
ground truth labeling.
Among them, the DUTS dataset \cite{wang2017learning} is a very 
recent dataset consisting of 10553 training images 
and 5019 test images in very complex scenarios.
For a fair comparison, we follow recent studies  
\cite{wang2018detect,liu2018picanet,wang2017stagewise,zeng2018learning}
to use DUTS training set for training and test on the DUTS test set 
(DUTS-TE) and other datasets.

\begin{table*}[!ht]
\centering
\renewcommand{\tabcolsep}{4.9mm}
\caption{Comparison between the proposed DNA and 16 competitors in terms 
of $F_\beta^\omega$-measure \cite{margolin2014evaluate} on six datasets. 
The unit of the number of parameters (\#Param) is million (M), and the unit 
of speed is frame per second (fps). 
We report results on both VGG16 \cite{simonyan2014very} backbone 
and ResNet-50 \cite{he2016deep} backbone.
The top three models in each column are highlighted in \first{red},
\second{green} and \third{blue}, respectively. 
For ResNet-50 based methods, we only highlight the top performance.
} \label{tab:eval_Fbw}
\begin{tabular}{c||c|c||c|c|c|c|c|c} \Xhline{1.0pt}
	Methods & \multicolumn{1}{c|}{\#Param} & \multicolumn{1}{c||}{Speed} 
	& DUTS-TE & ECSSD & HKU-IS & DUT-O & SOD & THUR15K \\ \hline
    \multicolumn{9}{c}{Non-deep learning} 
    \\ \hline
    DRFI \cite{jiang2013salient} & \multicolumn{1}{c|}{-} & 1/8 
    & 0.378 & 0.548 & 0.504 & 0.424 & 0.450 & 0.444 \\ \hline
    \multicolumn{9}{c}{VGG16 \cite{simonyan2014very} backbone} 
    \\ \hline
    MDF \cite{li2015visual} & 56.86 & 1/19 & 0.507 & 0.619 & - 
    & 0.494 & 0.528 & 0.508 \\
    LEGS \cite{wang2015deep} & \first{18.40} & 0.6 & 0.510 & 0.692 
    & 0.616 & 0.523 & 0.550 & 0.538 \\
    DCL \cite{li2016deep} & 66.24 & 1.4 & 0.632 & 0.782 & 0.770 
    & 0.584 & 0.669 & 0.624 \\
    DHS \cite{liu2016dhsnet} & 94.04 & 10.0 & 0.705 & 0.837 & 0.816 
    & - & 0.685 & 0.666 \\
    ELD \cite{lee2016deep} & 43.09 & 1.0 & 0.607 & 0.783 & 0.743 
    & 0.593 & 0.634 & 0.621 \\
    RFCN \cite{wang2016saliency} & 134.69 & 0.4 & 0.586 & 0.725 
    & 0.707 & 0.562 & 0.591 & 0.592 \\
    NLDF \cite{luo2017non} & 35.49 & \third{18.5} & 0.710 & 0.835 
    & 0.838 & 0.634 & 0.708 & 0.676 \\
    DSS \cite{hou2019deeply} & 62.23 & 7.0 & 0.700 & 0.832 & 0.821 
    & 0.643 & 0.698 & 0.662 \\
    Amulet \cite{zhang2017amulet} & 33.15 & 9.7 & 0.657 & 0.839 
    & 0.817 & 0.626 & 0.674 & 0.650 \\
    UCF \cite{zhang2017learning} & 23.98 & 12.0 & 0.595 & 0.805 
    & 0.779 & 0.574 & 0.673 & 0.613 \\
    PiCA \cite{liu2018picanet} & 32.85 & 5.6 & \second{0.745} 
    & \second{0.862} & \third{0.847} & \third{0.691} 
    & \second{0.721} & \third{0.688} \\
    C2S \cite{li2018contour} & 137.03 & 16.7 & 0.717 & 0.849 
    & 0.835 & 0.663 & 0.700 & 0.685 \\
    RAS \cite{chen2018reverse} & \third{20.13} & \second{20.4} 
    & \third{0.739} & \third{0.855} & \second{0.850} & \second{0.695} 
    & \third{0.718} & \second{0.691} \\ 
    \textbf{DNA} & \second{20.06} & \first{25.0} & \first{0.797} 
    & \first{0.897} & \first{0.889} & \first{0.729} & \first{0.755} 
    & \first{0.723} \\ \hline
    \multicolumn{9}{c}{ResNet-50 \cite{he2016deep} backbone} 
    \\ \hline
    SRM \cite{wang2017stagewise} & 43.74 & 12.3 & 0.721 & 0.849 
    & 0.835 & 0.658 & 0.670 & 0.684 \\
    BRN \cite{wang2018detect} & 126.35 & 3.6 & 0.774 & 0.887 & 0.876 
    & 0.709 & 0.738 & 0.712 \\ 
    PiCA \cite{liu2018picanet} & 37.02 & 4.4 & 0.754 & 0.863 & 0.841 
    & 0.695 & 0.722 & 0.690 \\ 
    \textbf{DNA} & \first{29.31} & \first{12.8} & \first{0.810} 
    & \first{0.901} & \first{0.898} & \first{0.735} & \first{0.755} 
    & \first{0.730} \\ \Xhline{1.0pt}
\end{tabular}
\end{table*}

\myPara{Evaluation criteria.}
We utilize three evaluation metrics to evaluate our method 
as well as other recent salient object detectors, 
including max F-measure score ($F_\beta$), mean absolute 
error (MAE), and the weighted $F_\beta^\omega$-measure 
score \cite{margolin2014evaluate}.

Given a predicted saliency map with continuous probability values, 
we can convert it into binary maps with arbitrary thresholds and 
computing corresponding precision/recall values.
Taking the average of precision/recall values over all images 
in a dataset, we can get many mean 
precision/recall pairs.
F-measure is an overall performance indicator: 
\begin{equation}
F_\beta = \frac{(1+\beta^2) \times {\rm Precision} \times {\rm Recall}}{
\beta^2 \times {\rm Precision} + {\rm Recall}},
\end{equation}
in which $\beta^2$ is usually set to 0.3 to emphasize more on precision.
We follow recent studies 
\cite{luo2017non,hou2019deeply,zhang2017amulet,zhang2017learning,liu2018picanet,li2018contour,chen2018reverse} 
to report max $F_\beta$ across different thresholds.

Given a saliency map $S$ and the corresponding ground truth $G$ 
that are normalized to [0, 1], MAE can be calculated as 
\begin{equation}
{\rm MAE} = \frac{1}{H \times W} \sum_{i=1}^{H} \sum_{j=1}^{W} |S(i,j)-G(i,j)|,
\end{equation}
where $H$ and $W$ are height and width, respectively.
$S(i,j)$ denotes the saliency score at location $(i,j)$,
similar to $G(i,j)$.

As demonstrated in \cite{margolin2014evaluate}, traditional 
evaluation metrics easily suffer from the interpolation flaw, 
dependency flaw, and equal-importance flaw.
Hence the weighted $F_\beta^\omega$-measure score is proposed 
to amend these flaws.
We follow \cite{gong2015saliency,hu2017deep,tu2016real,liu2018picanet} 
to adopt $F_\beta^\omega$-measure as a metric with the default 
settings.

\subsection{Performance Comparison}
We compare our proposed salient object detector with 16 recent 
competitive saliency models, including 
DRFI \cite{jiang2013salient}, MDF \cite{li2015visual}, 
LEGS \cite{wang2015deep}, DCL \cite{li2016deep}, 
DHS \cite{liu2016dhsnet}, ELD \cite{lee2016deep}, 
RFCN \cite{wang2016saliency}, NLDF \cite{luo2017non}, 
DSS \cite{hou2019deeply}, SRM \cite{wang2017stagewise}, 
Amulet \cite{zhang2017amulet}, UCF \cite{zhang2017learning}, 
BRN \cite{wang2018detect}, PiCA \cite{liu2018picanet}, 
C2S \cite{li2018contour} and RAS \cite{chen2018reverse}. 
Among them, DRFI \cite{jiang2013salient} is the best-known
non-deep-learning based method, and the other 15 models are 
all based on deep learning.
We do not report MDF \cite{li2015visual} results on the 
HKU-IS \cite{li2015visual} dataset because MDF uses a part of HKU-IS
for training.
Due to the same reason, we do not report DHS \cite{liu2016dhsnet} 
results on the DUT-O \cite{yang2013saliency} dataset.
Since SRM \cite{wang2017stagewise} and BRN \cite{wang2018detect}
are built based on the ResNet-50 \cite{he2016deep} backbone, 
we also report the results of the ResNet-50 version of the proposed 
DNA and PiCA \cite{liu2018picanet} for a fair comparison.
All previous methods are tested using their publicly available code 
and the pretrained models released by the authors with default settings.

\begin{figure*}[!tb]
    \centering
    \leftline{\scriptsize Simple Scenes $\mid$ Center Bias}
    \includegraphics[width=\linewidth]{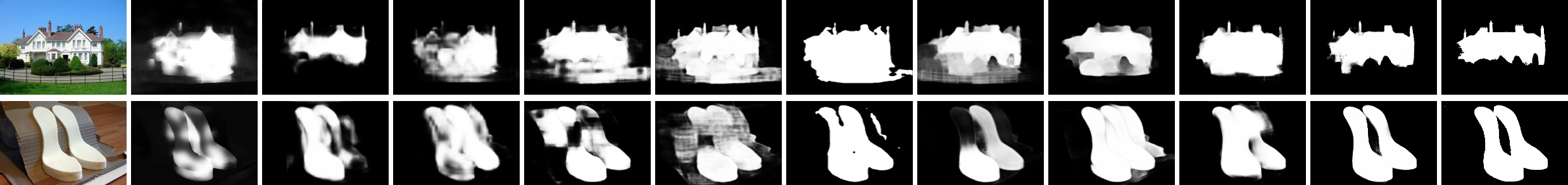}
    \\ \vspace{-0.12in} \rule{\linewidth}{0.2mm} 
    \leftline{\scriptsize Thin Objects $\mid$ Thin Object Parts 
    $\mid$ Large Objects} \\ \vspace{0.04in}
    \includegraphics[width=\linewidth]{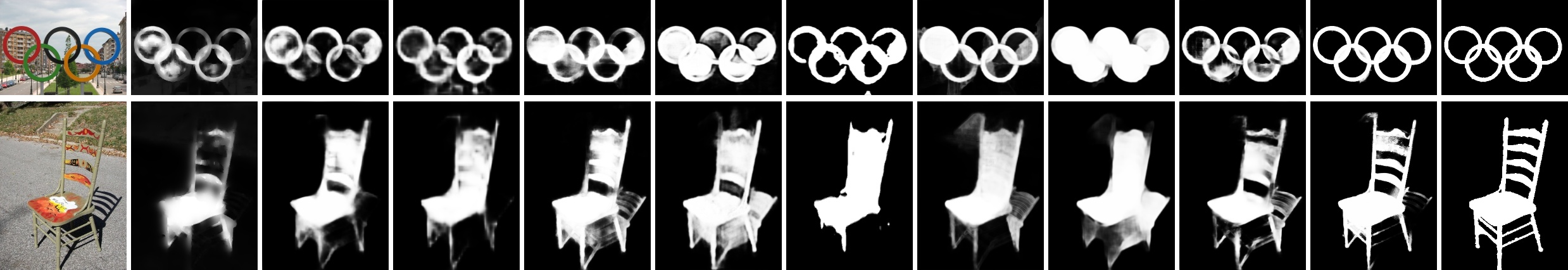} 
    \\ \vspace{-0.12in} \rule{\linewidth}{0.2mm} 
    \leftline{\scriptsize Low Contrast $\mid$ Complex Scenes
    $\mid$ Complex Textures} \\ \vspace{0.04in}
    \includegraphics[width=\linewidth]{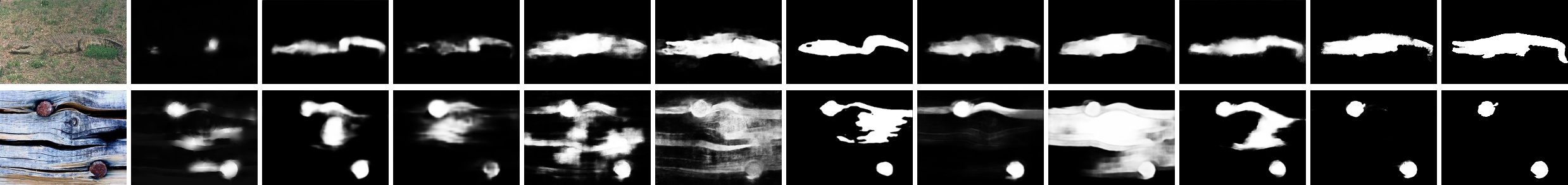}
    \\ \vspace{-0.12in} \rule{\linewidth}{0.2mm} 
    \leftline{\scriptsize Large Objects $\mid$ Confusing Background} 
    \\ \vspace{0.04in}
    \includegraphics[width=\linewidth]{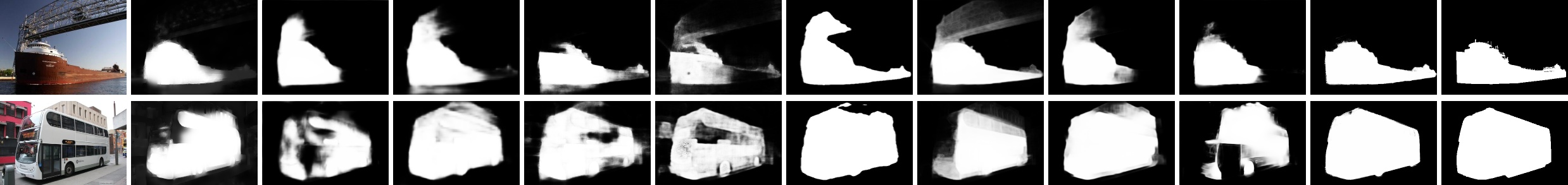}
    \\ \vspace{-0.12in} \rule{\linewidth}{0.2mm} 
    \leftline{\scriptsize Multiple Objects $\mid$ Complex Scenes} 
    \\ \vspace{0.04in}
    \includegraphics[width=\linewidth]{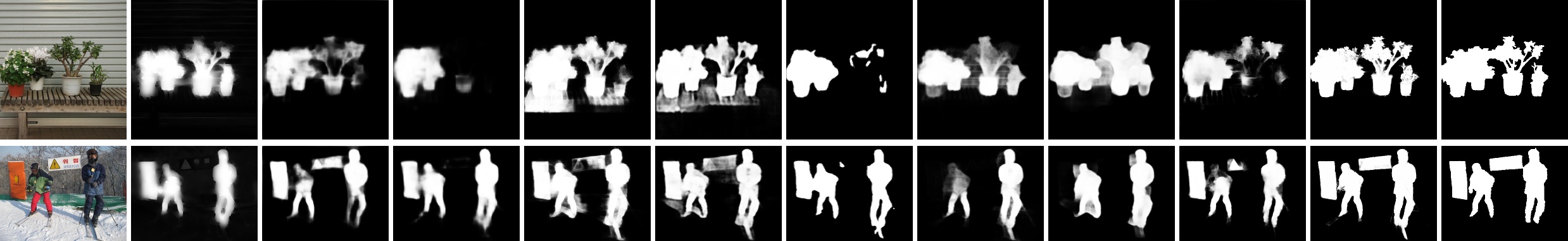}
    \\ \vspace{-0.12in} \rule{\linewidth}{0.2mm} 
    \leftline{\scriptsize Complex Scenes $\mid$ Complex Textures 
    $\mid$ Multiple Objects} \\ \vspace{0.04in}
    \includegraphics[width=\linewidth]{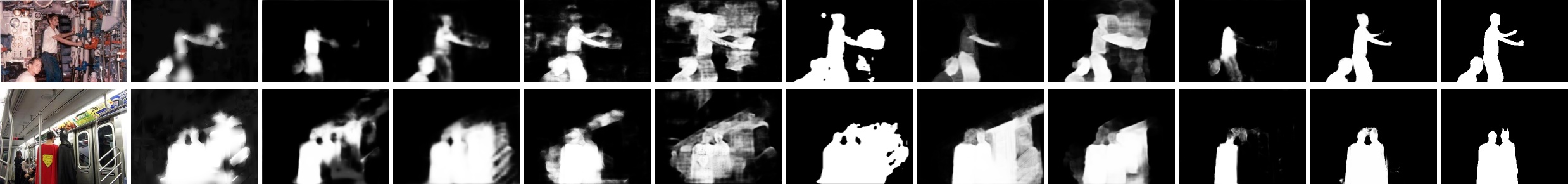}
    \\ \vspace{-0.12in} \rule{\linewidth}{0.2mm} 
    \leftline{\scriptsize Confusing Background $\mid$ Low Contrast} 
    \\ \vspace{0.04in}
    \includegraphics[width=\linewidth]{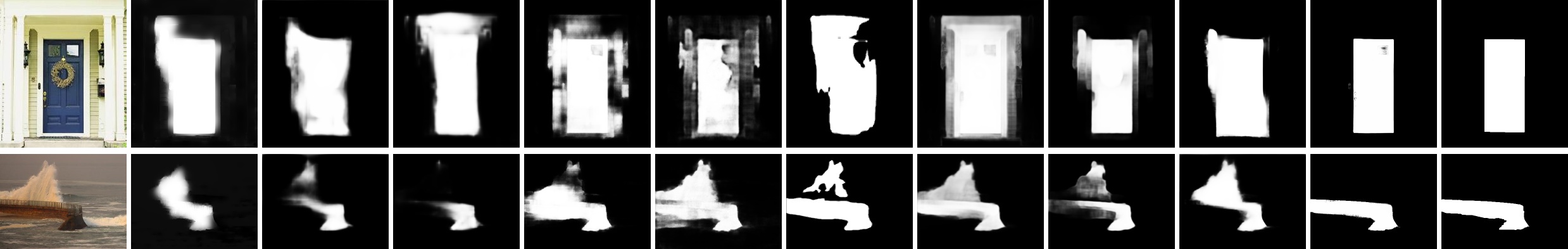}
    \\ \vspace{-0.12in} \rule{\linewidth}{0.2mm} 
    \leftline{\scriptsize Abnormal Brightness $\mid$ Large Objects} 
    \\ \vspace{0.04in}
    \includegraphics[width=\linewidth]{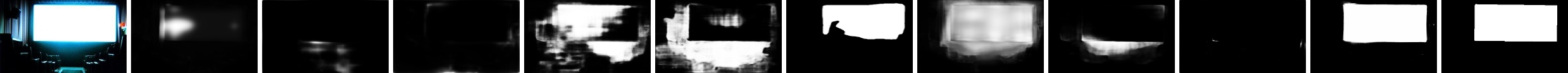}
    \\ \vspace{-0.05in}
    \leftline{\scriptsize\hspace{0.12in} Image \hspace{0.29in} RFCN
    \hspace{0.31in} DSS \hspace{0.34in} SRM \hspace{0.29in} Amulet
    \hspace{0.29in} UCF \hspace{0.34in} BRN \hspace{0.33in} PiCA 
    \hspace{0.34in} C2S \hspace{0.34in} RAS \hspace{0.34in} 
    \textbf{Ours} \hspace{0.36in} GT}
    \caption{Qualitative comparison between DNA and recent 
    competitive methods.
    Here, GT represents ground-truth saliency maps.
    }
    \label{fig:samples}
\end{figure*}

\begin{table*}[!t]
\centering
\renewcommand{\tabcolsep}{2.7mm}
\caption{Ablation studies. U-Net means the standard U-Net 
\cite{ronneberger2015u} with VGG16 backbone. 
If removing the DNA module and deep supervision, the proposed 
network in \figref{fig:framework} becomes an encoder-decoder 
network that is called \textit{Encoder-Decoder}. 
\textit{Encoder-Decoder w/ K3} replaces all the convolutions 
at the top sides of Encoder-Decoder with $3 \times 3$ convolutions. 
\textit{Encoder-Decoder w/ lin} means we replace the DNA module 
in \figref{fig:framework} with traditional linear aggregation 
in \cite{xie2017holistically}.} \label{tab:improvement}
\begin{tabular}{c||c|c|c|c|c|c|c|c|c|c|c|c} \Xhline{1.0pt}
    \multirow{2}*{Methods} & \multicolumn{2}{c|}{DUTS-TE} 
	& \multicolumn{2}{c|}{ECSSD} & \multicolumn{2}{c|}{HKU-IS} 
	& \multicolumn{2}{c|}{DUT-O} & \multicolumn{2}{c|}{SOD} 
	& \multicolumn{2}{c}{THUR15K} \\ \cline{2-13}
    & $F_\beta$ & MAE & $F_\beta$ & MAE & $F_\beta$ & MAE
    & $F_\beta$ & MAE & $F_\beta$ & MAE & $F_\beta$ & MAE \\ \hline
    U-Net & 0.793 & 0.080 & 0.890 & 0.065 & 0.894 & 0.051 
    & 0.723 & 0.101 & 0.811 & 0.115 & 0.758 & 0.099 \\ \hline
    Encoder-Decoder w/ K3 & 0.766 & 0.101 & 0.869 & 0.081 & 0.876 
    & 0.064 & 0.687 & 0.129 & 0.778 & 0.131 & 0.736 & 0.112 \\ \hline
    Encoder-Decoder & 0.831 & 0.053 & 0.911 & 0.052 & 0.916 
    & 0.037 & 0.754 & 0.073 & 0.830 & 0.117 & 0.780 & 0.077 \\ \hline
    Encoder-Decoder w/ lin & 0.844 & 0.048 & 0.921 & 0.050 & 0.917 
    & 0.034 & 0.765 & 0.066 & 0.839 & 0.120 & 0.785 & 0.071 \\ \hline
    DNA w/o Deep Supervision & 0.867 & 0.042 & 0.932 & 0.041
    & 0.927 & 0.032 & 0.788 & 0.059 & 0.860 & 0.103
    & 0.794 & 0.068
    \\ \hline
    DNA & 0.865 & 0.044 & 0.935 & 0.041 & 0.930 & 0.031 
    & 0.799 & 0.056 & 0.853 & 0.107 & 0.793 & 0.069
    \\ \Xhline{1.0pt}
\end{tabular}
\end{table*}

\begin{table*}[!htb]
\centering
\caption{Ablation studies for various parameter settings. 
The unit of the number of parameters (\#Param) is million (M), and the unit 
of speed is frame per second (fps).} \label{tab:parameter_settings}
\resizebox{\textwidth}{!}{
\begin{tabular}{c||P{3.2}|P{2.1}||c|c|c|c|c|c|c|c|c|c|c|c} \Xhline{1.0pt}
	\multirow{2}*{Methods} & \multicolumn{1}{c|}{\multirow{2}*{\#Param}}
    & \multicolumn{1}{c||}{\multirow{2}*{Speed}} 
    & \multicolumn{2}{c|}{DUTS-TE} & \multicolumn{2}{c|}{ECSSD} 
    & \multicolumn{2}{c|}{HKU-IS} & \multicolumn{2}{c|}{DUT-O} 
    & \multicolumn{2}{c|}{SOD} & \multicolumn{2}{c}{THUR15K} 
    \\ \cline{4-15}
    &&& $F_\beta$ & MAE & $F_\beta$ & MAE & $F_\beta$ & MAE
    & $F_\beta$ & MAE & $F_\beta$ & MAE & $F_\beta$ & MAE
    \\ \hline
    \#1 & 18.49 & 27.0 & 0.859 & 0.044 & 0.932 & 0.041 & 0.928 & 0.031 
    & 0.796 & 0.057 & 0.855 & 0.105 & 0.790 & 0.069
    \\ \rowcolor{mycolor}
    \#2 & 20.06 & 25.0 & 0.865 & 0.044 & 0.935 & 0.041 & 0.930 
    & 0.031 & 0.799 & 0.056 & 0.853 & 0.107 & 0.793 & 0.069
    \\
    \#3 & 27.88 & 22.7 & 0.866 & 0.043 & 0.936 & 0.041 & 0.930 
    & 0.031 & 0.799 & 0.056 & 0.861 & 0.106 & 0.792 & 0.069
    \\
    \#4 & 41.41 & 18.2 & 0.864 & 0.044 & 0.935 & 0.041 & 0.931 
    & 0.030 & 0.800 & 0.056 & 0.857 & 0.105 & 0.792 & 0.069 
    \\ \Xhline{1.0pt}
\end{tabular}}
\end{table*}

\begin{table}[!tb]
\centering
\renewcommand{\tabcolsep}{1.8mm}								
\caption{Parameter settings for ablation studies in 
\tabref{tab:parameter_settings}. The default setting in this 
paper is highlighted in dark.} \label{tab:ablation_config}
\begin{tabular}{c||c|>{\columncolor{mycolor}}c|c|c} \Xhline{1.0pt}
No. & \#1 & \#2 & \#3 & \#4 \\ \hline 
Side 1 & $(3 \times 3, 64)$ & $(3 \times 3, 64)$ 
& $(3 \times 3, 64)$ & $(3 \times 3, 64)$ \\
Side 2 & $(3 \times 3, 128)$ & $(3 \times 3, 128)$ 
& $(3 \times 3, 128)$ & $(3 \times 3, 128)$ \\
Side 3 & $(3 \times 3, 128)$ & $(5 \times 5, 128)$ 
& $(5 \times 5, 128)$ & $(5 \times 5, 256)$ \\
Side 4 & $(3 \times 3, 128)$ & $(5 \times 5, 128)$ 
& $(5 \times 5, 256)$ & $(5 \times 5, 256)$ \\
Side 5 & $(3 \times 3, 128)$ & $(5 \times 5, 128)$ 
& $(5 \times 5, 256)$ & $(5 \times 5, 512)$ \\
Side 6 & $(192, 128)$ & $(192, 128)$ & $(256, 256)$ 
& $(256, 256)$ \\ \Xhline{1.0pt}
\end{tabular}
\end{table}

\myPara{F-measure and MAE.}
\tabref{tab:eval_fb_mae} summarizes the numeric comparison 
in terms of F-measure ($F_\beta$) and MAE on six datasets.
DNA can significantly outperform other competitors in most cases, 
which demonstrates its effectiveness. 
With VGG16 \cite{simonyan2014very} backbone, the $F_\beta$ 
values of DNA are 2.8\%, 1.2\%, 1.4\%, 1.4\%, 0.6\% and 1.0\% 
higher than the second best method on the DUTS-TE, ECSSD, HKU-IS, 
DUT-O, SOD and THUR15K datasets, respectively.
As can be seen, DNA also achieves the best performance in terms 
of MAE metric except on the SOD dataset where DNA performs 
slightly worse than PiCA \cite{liu2018picanet}.
Overall, PiCA \cite{liu2018picanet} seems to achieves the second place. 
With the ResNet-50 backbone, DNA still performs better than previous 
competitors, indicating DNA is robust to different network architectures.
Therefore, we suggest the future salient object detectors 
using nonlinear side-output aggregation instead of the traditional 
linear aggregation.

\myPara{Weighted $F_\beta^\omega$-measure.}
The weighted $F_\beta^\omega$-measure is also a commonly-used 
saliency evaluation metric.
In \tabref{tab:eval_Fbw}, we evaluate DNA and above-mentioned 
competitors using the $F_\beta^\omega$-measure.
The VGG16 version of DNA achieves 5.2\%, 3.5\%, 3.9\%, 3.4\%, 
3.4\% and 3.2\% higher $F_\beta^\omega$-measure than the second 
best performance on the DUTS-TE, ECSSD, HKU-IS, DUT-O, SOD 
and THUR15K datasets, respectively. 
For ResNet-50 version, DNA achieves 3.6\%, 1.4\%, 2.2\%, 2.6\%, 
1.7\% and 1.8\% better $F_\beta^\omega$-measure than previous 
competitors. 
Note that the network of DNA is very simple, making 
it easy to be followed and applied to other vision tasks.

\myPara{Number of parameters and runtime.}
As shown in \tabref{tab:eval_Fbw}, DNA has fewer parameters, 
\ie, about 20M parameters with VGG16 backbone and 29M parameters 
with ResNet-50 backbone. 
DNA also runs faster than other methods, achieving 25fps with 
VGG16 and 12.8fps with ResNet-50.

\myPara{Qualitative comparison.}
To visually exhibit the superiority of the proposed DNA over 
previous methods, we select some representative images from 
various datasets to incorporate a variety of difficult 
circumstances, including complicated scenes, salient objects with 
thin structures, low contrast between foreground and background,
multiple objects with different sizes, scenes with abnormal 
brightness, and \etc.
We display a qualitative comparison in \figref{fig:samples} 
where we split the selected images into multiple groups, 
each of which is with several tags to describe its properties. 
Taking all circumstances into account, the proposed DNA 
can segment the right salient objects with coherent boundaries
and connected regions, even in the complex, low-contrast, 
and abnormal scenes.
This is the reason why DNA behaves better than other methods 
in the above quantitative comparison.

\subsection{Ablation Studies} \label{sec:ablation}
\noindent\textbf{Nonlinear aggregation \vs linear aggregation.}
To demonstrate the effectiveness of nonlinear aggregation, 
we replace the DNA module in our network with the traditional 
linear side-output prediction aggregation \cite{xie2017holistically} 
to obtain a deeply-supervised encoder-decoder, 
\ie, \textit{Encoder-Decoder w/ lin}.
The results are shown in \tabref{tab:improvement}.
We can clearly see that nonlinear aggregation performs significantly 
better than linear aggregation, in terms of both $F_\beta$ and MAE.
A qualitative comparison between the linear side-output 
aggregation and nonlinear aggregation is shown in the $4^{\rm th}$ and 
$5^{\rm th}$ columns of \figref{fig:ablation_samples}.
The superiority of nonlinear aggregation can be clearly observed 
in various complicated scenarios.

\myPara{The proposed encoder-decoder \vs standard U-Net.}
If removing the DNA module and deep supervision, the proposed 
encoder-decoder is a simply modified version of 
U-Net \cite{ronneberger2015u}. 
First, we change the kernel size of all convolutions at top sides, 
\ie, $K_3 \times K_3$, $K_4 \times K_4$ and $K_5 \times K_5$,
into $3 \times 3$.
As displayed in \tabref{tab:improvement}, the resulting model, 
\textit{Encoder-Decoder w/ K3}, perform worse than the standard 
U-Net \cite{ronneberger2015u}. 
This could be because the proposed encoder-decoder has less feature 
channels and thus less parameters (U-Net  has 31.06M parameters). 
Next, we use the default kernel size of $5 \times 5$ for top sides.
The resulting model, \textit{Encoder-Decoder}, performs better than U-Net. 
This demonstrates large kernel size at the top sides is important 
for better performance.
We provide the qualitative comparison between 
\textit{Encoder-Decoder} and U-Net in the $2^{\rm nd}$ and $3^{\rm rd}$ 
columns of \figref{fig:ablation_samples}.
The proposed \textit{Encoder-Decoder} can predict better saliency maps.

\myPara{Encoder-decoder with or without deep supervision.}
In \tabref{tab:improvement}, \textit{Encoder-Decoder w/ lin}
performs better than \textit{Encoder-Decoder}, which can also
be seen in the $3^{\rm rd}$ and $4^{\rm th}$ columns of 
\figref{fig:ablation_samples}.
If removing the deep supervision in DNA, the resulting model
(\textit{DNA w/o Deep Supervision}) performs worse than DNA 
in most scenarios.
Therefore, deep supervision can consistently improve the 
saliency prediction performance.

\begin{figure}[!tb]
    \centering
    \includegraphics[width=\linewidth]{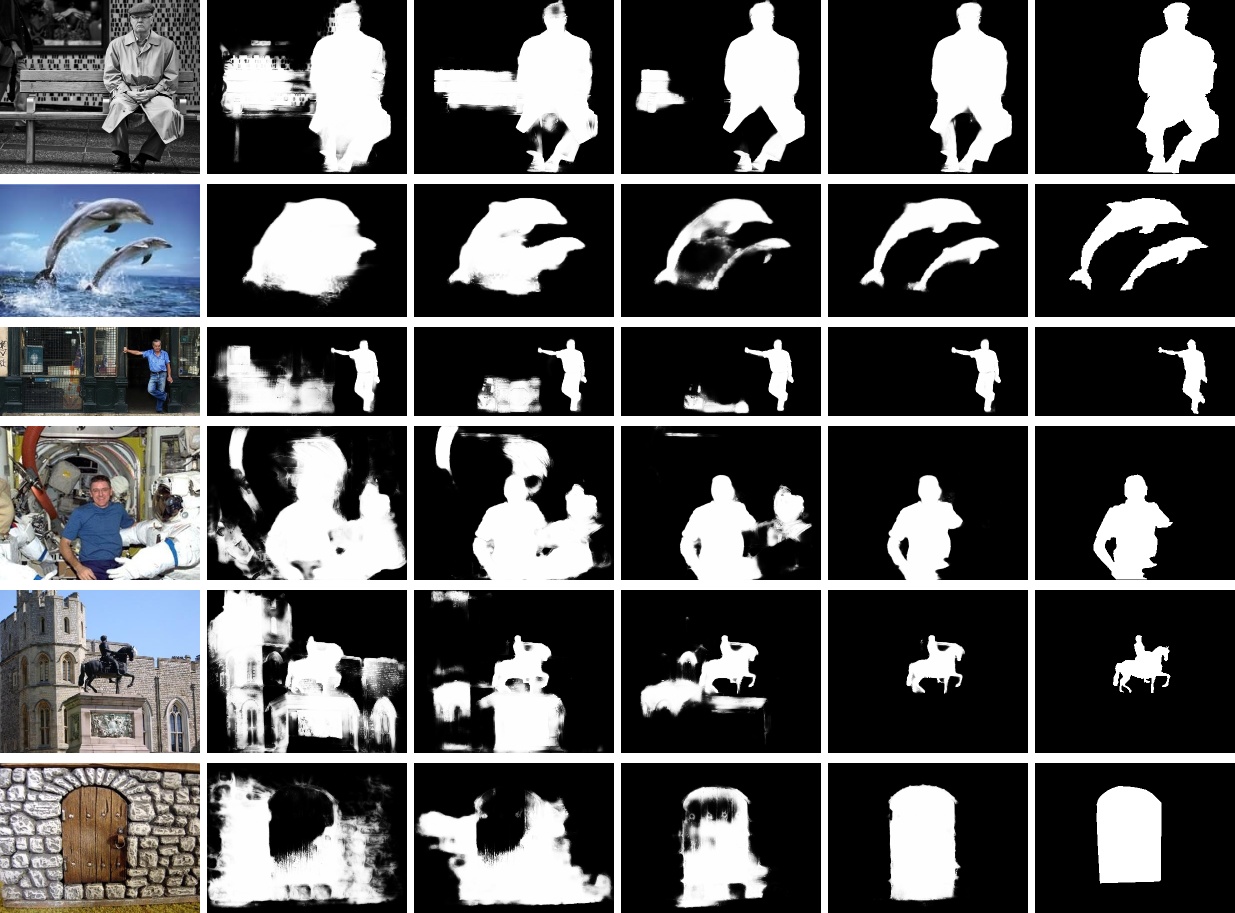} 
    \\ \vspace{-0.05in}
    \leftline{\scriptsize\hspace{0.13in} Image \hspace{0.27in} U-Net
    \hspace{0.33in} ED \hspace{0.26in} ED w/ lin \hspace{0.22in} DNA
    \hspace{0.34in} GT}
    \caption{Qualitative comparison between different model variants.
    ED: Encoder-Decoder; ED w/ lin: Encoder-Decoder w/ lin.
    From this figure, we can clearly see that the quality of saliency 
    prediction gradually increases from left to right.
    Since \textit{ED w/ lin} just replaces the nonlinear side-output 
    aggregation in DNA with linear aggregation, this figure demonstrates 
    the superiority of nonlinear aggregation in saliency detection.
    } \label{fig:ablation_samples}
\end{figure}

\begin{table}[!t]
\centering
\caption{Various convolution kernel sizes in the DNA module.}
\label{tab:aggregation}
\resizebox{\linewidth}{!}{
\begin{tabular}{c|c||c|c|c|c|c} \Xhline{1.0pt}
    Datasets & Metrics & $3 \times 3$ & $5 \times 5$ 
    & $7 \times 7$ & \makecell[c]{$1 \times 7$ \\ $7 \times 1$} 
    & \makecell[c]{$1 \times 9$ \\ $9 \times 1$} 
    \\ \hline
    \multirow{2}*{DUTS-TE} & $F_\beta$ & 0.861 & 0.863 & 0.865 
    & 0.865 & 0.864 
    \\ \cline{2-7}
    & MAE & 0.045 & 0.045 & 0.043 & 0.044 & 0.044
    \\ \hline
    \multirow{2}*{ECSSD} & $F_\beta$ & 0.930 & 0.933 & 0.935 
    & 0.935 & 0.935
    \\ \cline{2-7}
    & MAE & 0.042 & 0.041 & 0.041 & 0.041 & 0.040
    \\ \hline
    \multirow{2}*{DUT-O} & $F_\beta$ & 0.795 & 0.797 & 0.799 
    & 0.799 & 0.798 
    \\ \cline{2-7}
    & MAE & 0.058 & 0.058 & 0.057 & 0.056 & 0.056
    \\ \cline{1-7}
    \multicolumn{2}{c||}{Speed (fps)} & 27.8 & 23.2 & 19.6 
    & 25.0 & 20.4
    \\ \Xhline{1.0pt}
\end{tabular}}
\end{table}

\myPara{Parameter settings.}
To evaluate the effect of different parameter settings, we try 
various parameter settings in \tabref{tab:ablation_config}.
For side 1-5, we report the settings of $(K_i \times K_i, C_i)$.
For side 6, we report the settings of $C_6^{(1)},C_6^{(2)}$.
The evaluation results are summarized in \tabref{tab:parameter_settings}.
From the first and second experiment, we can see that large kernel 
sizes at top sides lead to better results, but the improvement 
is not as significant as in \tabref{tab:improvement} where 
deep supervision is not used.
From the third and fourth experiments, we find that introducing more 
parameters by increasing the convolution channels can generate 
slightly better results.
Considering the trade-off between the performance, the number 
of parameters and speed, we choose the second setting 
as our default parameters.

\myPara{The asymmetric convolutions in DNA module.}
In \tabref{tab:aggregation}, we evaluate various convolution kernel 
sizes for the DNA model. 
Large convolution kernel sizes perform better than small
kernel sizes, but increasing kernel size from 7 to 9 does not 
improve the performance.
The standard two-dimensional $7 \times 7$ convolution 
is time-consuming as shown in \tabref{tab:aggregation}, 
because the feature maps in DNA is with the same resolution 
as original images. 
Hence, we use asymmetric convolutions (\ie, $1 \times 7, 7 \times 1$) 
to achieve both large kernel size and fast speed.

\section{Conclusion}
Previous deeply-supervised saliency detection networks use
linear side-output prediction aggregation. 
We theoretically and experimentally demonstrate that linear 
side-output aggregation is suboptimal and worse than nonlinear aggregation.
Based on this observation, we propose the DNA module that aggregates
multi-level side-output features in a nonlinear way.
With a simply modified U-Net, DNA can reach new \sArt under 
various metrics when compared with 16 recent saliency models. 
The proposed network also has less parameters and faster running speed,
which demonstrate the effectiveness of DNA.
In the future, we plan to apply DNA to further improve salient object 
detection and exploit it in other vision tasks that need 
multi-scale and multi-level information.

\bibliographystyle{IEEEtran}
\bibliography{reference}

\end{document}